\definecolor{lightgray}{gray}{0.95}
\setlist[itemize,enumerate]{noitemsep, topsep=0pt} 
\theoremstyle{plain}
\newtheorem{theorem}{Theorem}[section]
\theoremstyle{definition}
\newtheorem{definition}[theorem]{Definition}
\theoremstyle{remark}
\title{RIFT: Repurposing Negative Samples via Reward-Informed Fine-Tuning}
\author{Zehua Liu, Shuqi Liu$^\dagger$, Tao Zhong, Mingxuan Yuan\\
Huawei Noah's Ark Lab\\
\texttt{liuzehua@connect.hku.hk, liu.shuqi1@huawei.com}\\
% \texttt{wu.han1@huawei.com}\\
% \texttt{linqi.song@cityu.edu.hk}
}
\begin{document}
\maketitle
\begin{abstract}
While Supervised Fine-Tuning (SFT) and Rejection Sampling Fine-Tuning (RFT) are standard for LLM alignment, they either rely on costly expert data or discard valuable negative samples, leading to data inefficiency. To address this, we propose Reward Informed Fine-Tuning (RIFT), a simple yet effective framework that utilizes all self-generated samples. Unlike the hard thresholding of RFT, RIFT repurposes negative trajectories, reweighting the loss with scalar rewards to learn from both the positive and negative trajectories from the model outputs. To overcome the training collapse caused by naive reward integration, where direct multiplication yields an unbounded loss, we introduce a stabilized loss formulation that ensures numerical robustness and optimization efficiency. Extensive experiments on mathematical benchmarks across various base models show that RIFT consistently outperforms RFT. Our results demonstrate that RIFT is a robust and data-efficient alternative for alignment using mixed-quality, self-generated data.
\end{abstract}

{
\let\thefootnote\relax\footnotetext{
$^\dagger$Corresponding author.}
}

\section{Introduction}

The rapid scaling of Large Language Models (LLMs) has made effective post-training adaptation essential \cite{DBLP:journals/corr/abs-2308-10792, DBLP:journals/jmlr/ChungHLZTFL00BW24, DBLP:conf/icml/ChuZYTXSLL025}. Supervised Fine-Tuning (SFT) \cite{DBLP:conf/nips/Ouyang0JAWMZASR22, DBLP:conf/iclr/SanhWRBSACSRDBX22}, which minimizes the negative log-likelihood of expert demonstrations, constitutes the standard approach for aligning models with desired behaviors. However, the efficacy of SFT is heavily dependent upon the availability of high-quality demonstration data, which are generally difficult and costly to curate. More critically, a distributional mismatch between the pre-training data or initial model capabilities and the SFT data can lead to degraded performance, a phenomenon often described as catastrophic forgetting or alignment tax \cite{DBLP:conf/nips/KorbakEKD22, DBLP:journals/corr/abs-2308-08747, DBLP:conf/acl/HuangCWYLSYS24, DBLP:conf/icml/0014LZYZZ025}.

% \begin{figure}[t]
%     \centering
%     \begin{minipage}[c]{0.23\textwidth}
%         \centering
%         \includegraphics[width=\textwidth]{figs/gpu_acc.png}
%         \caption{Scatter Plot Description}
%     \end{minipage}
%     \hfill
%     \begin{minipage}[c]{0.23\textwidth}
%         \centering
%         \includegraphics[width=\textwidth]{figs/gpu_acc.png}
%         \caption{Radar Chart Description}
%     \end{minipage}
% \end{figure}

\begin{figure}
    \centering
    \includegraphics[width=1.0\linewidth]{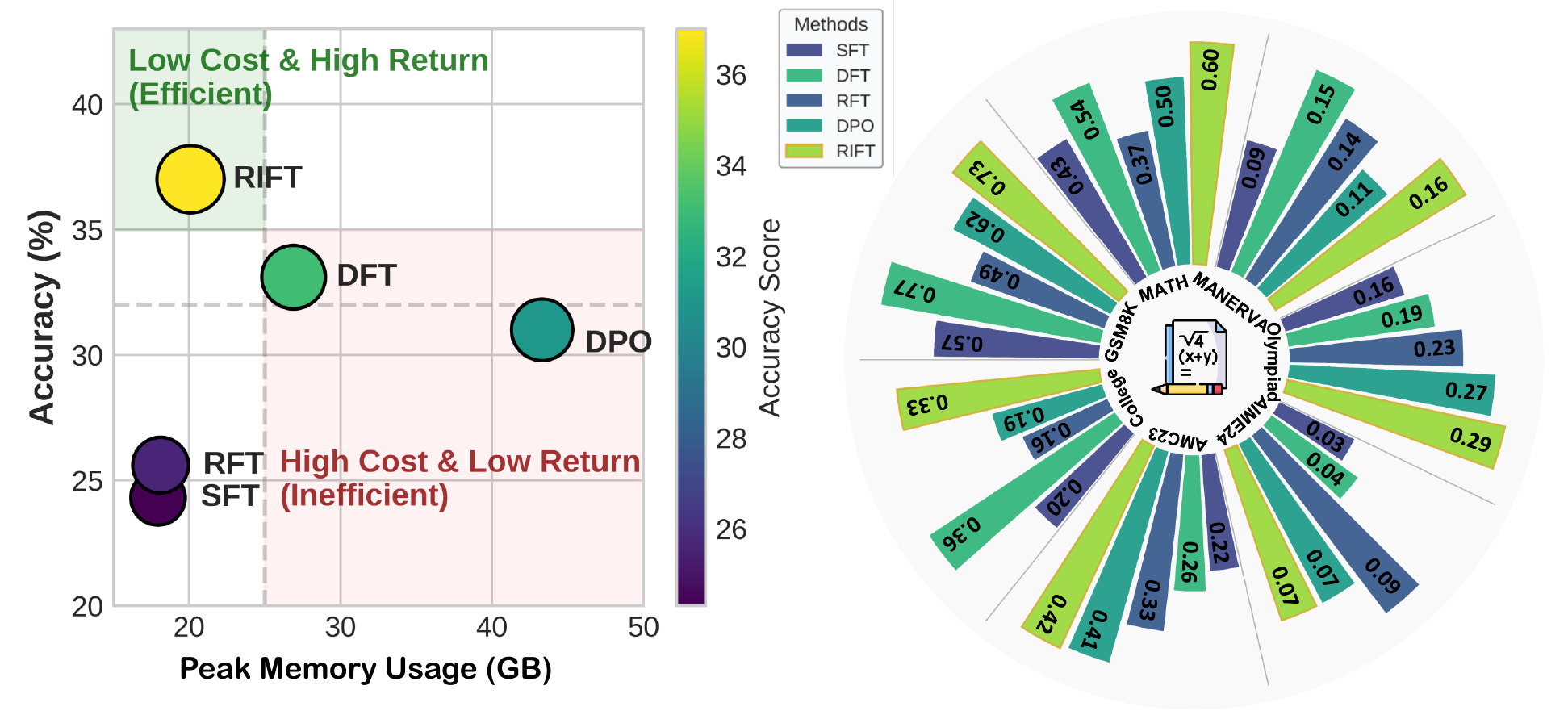}
    \caption{Efficiency and performance of post-training methods for Qwen2.5-Math-1.5B fine-tuned on MATH. \textbf{\textit{Left}}: average accuracy against peak memory utilization (training efficiency); \textbf{\textit{Right}}: per-dataset accuracy (generalization). RIFT surpasses strong baselines in accuracy while requiring less computational memory.}
    \label{fig:gpu_rader_chart}
\end{figure}

To mitigate these data-related limitations in SFT, Rejection Sampling Fine-Tuning (RFT) \cite{yuan_2023_scaling, DBLP:conf/icml/ChenDYJG24} has emerged as a lightweight yet effective alternative. The underlying principle of RFT is straightforward: by sampling multiple responses from a base model to a given prompt and subsequently selecting only those that surpass a predefined quality threshold, one can construct a refined, higher-quality dataset for a subsequent round of SFT. Unlike conventional SFT that relies on pre-constructed static datasets, RFT generates its training data through the model's own sampling process. This self-generation strategy inherently promotes alignment between the data distribution used for fine-tuning and the model's own output distribution. Furthermore, the quality and correctness of the selected data is ensured through an external verification mechanism or a well-defined scoring function.

Despite its simplicity and advantages, the standard RFT paradigm exhibits a critical shortcoming: it discards all sub-threshold (negative) samples outright. This discard policy neglects the potential informational value these samples carry regarding model failure modes. Consequently, it not only wastes computational resources expended during generation, but may also impair the model to learn distinctions between correct and incorrect outputs, thereby limiting its capacity to refine its understanding of subtle errors.

To better utilize all generated responses, including those rejected by quality thresholds in RFT, we propose \textbf{Reward Informed Fine-Tuning (RIFT)}. RIFT constitutes a simple and efficient extension of standard SFT. In contrast to RFT, which discards low-scoring candidates via hard thresholding, RIFT retains every sampled trajectory and assigns it a scalar reward derived from a quality evaluation metric. The RIFT objective modifies the standard negative log-likelihood loss by reweighting each sample's contribution proportionally to its assigned reward. This design ensures that positive-reward samples encourage correct behaviors, whereas negative-reward samples provide reduced or negative gradients.

Nevertheless, a naive integration of the reward signal with the logarithmic probability term presents a significant practical challenge. As we will demonstrate in Section~\ref{sec:methodology}, directly multiplying these components produces a loss function that is unbounded from below, inevitably leading to severe training collapse. To address this fundamental issue, we introduce a principled framework for loss function formulation, which is designed to ensure guaranteed training stability and maintain optimization efficiency.
% As early empirical validation,
Figure~\ref{fig:gpu_rader_chart} compares RIFT with strong baselines (e.g., DPO \cite{rafailov_2023_direct}, DFT \cite{wu_2025_generalization}) on Qwen2.5-Math-1.5B: RIFT achieves comparable or superior accuracy at substantially lower peak memory utilization.

% Our contributions are validated through extensive experiments across 7 math reasoning benchmarks and 3 code generation benchmarks. We demonstrate that RIFT consistently outperforms standard SFT, RFT, and other alignment baselines such as DPO, providing a robust and data-efficient method for model alignment using self-generated mixed-quality data.

Empirically, RIFT delivers consistent and substantial improvements across model scales and alignment settings on mathematical reasoning benchmarks. RIFT outperforms SFT, DFT \cite{wu_2025_generalization}), RFT \cite{yuan_2023_scaling} and DPO \cite{rafailov_2023_direct} in both in-distribution accuracy and out-of-distribution generalization on Qwen2.5-Math (1.5B/7B) \cite{DBLP:journals/corr/abs-2412-15115}, Qwen3-1.7B \cite{DBLP:journals/corr/abs-2505-09388}, and DeepSeek-R1-Distill-Qwen-1.5B \cite{DBLP:journals/corr/abs-2501-12948}. Unlike RFT \cite{yuan_2023_scaling}, which critically depends on strong base models to generate high-quality rollouts, RIFT remains stable and effective even with moderately capable models. In off-policy settings, RIFT consistently surpasses DPO \cite{rafailov_2023_direct} across models. Notably, RIFT eliminates the need for a reference model, offering a simpler and more resource-efficient alternative for alignment.

\section{Related Works} 
\label{sec:related_works}

\paragraph{LLM Post-training}
Supervised Fine-Tuning (SFT) has become the standard post-training paradigm for adapting pretrained models to specific tasks using high-quality, labeled datasets \citep{zhang_2023_instruction, chung_2024_scaling}. Although the availability of high-quality instruction-following datasets \citep{cobbe_2021_gsm8k, mishra_2022_cross, zhou_2023_lima, alpaca} has significantly enhanced the efficacy of SFT, studies \citep{dodge_2020_finetuning, howard_2018_universal, ouyang_2022_training} indicate that SFT often suffers from overfitting and suboptimal generalization.

To mitigate the challenges of SFT data curation, Reinforcement Learning (RL) has emerged as a powerful alternative for post-training. Reinforcement Learning from Human Feedback (RLHF) \citep{ouyang_2022_training} and Reinforcement Learning from Verifiable Reward (RLVR) \citep{guo_2025_deepseek, shao_2024_deepseekmath} are widely used to align models with human preferences and enhance reasoning, supported by algorithms like DPO \citep{rafailov_2023_direct}, 
% KTO \citep{ethayarajh_2024_kto}, 
SimPO \citep{meng_2024_simpo}, 
% PPO \cite{schulman_2017_proximal}, 
GRPO \citep{shao_2024_deepseekmath}, and DAPO \cite{yu_2025_dapo}. However, in contrast to SFT, RL-based training is often more complex, necessitating intricate engineering frameworks \citep{zheng_2025_stabilizing}. 
% Furthermore, RL can occasionally degrade performance metrics such as pass@k, limiting its overall effectiveness \citep{yue_2025_does}."

% is utilized to align models with human preferences, while Reinforcement Learning from Verifiable Reward (RLVR) \citep{guo_2025_deepseek, shao_2024_deepseekmath} is designed to enhance their reasoning capabilities. In recent years, a multitude of RL algorithms have been proposed, including DPO \citep{rafailov_2023_direct}, KTO \citep{ethayarajh_2024_kto}, SimPO \citep{meng_2024_simpo}, PPO \cite{schulman_2017_proximal}, GRPO \citep{shao_2024_deepseekmath}, and DAPO \cite{yu_2025_dapo}, among others. However, in contrast to SFT, RL-based training is often more complex, necessitating intricate engineering frameworks \citep{zheng_2025_stabilizing}. Moreover, \citet{yue_2025_does} demonstrated that RL can, in some cases, degrade a model's performance when evaluated from the pass@k perspective.

\paragraph{Improving SFT}
Motivated by the success of RL methods, a growing body of research aims to enhance SFT by integrating RL principles. 
RFT \citep{yuan_2023_scaling} utilizes self-generated trajectories filtered for correctness as training data. Other approaches re-frame RL objectives within an SFT framework, such as integrating importance sampling \citep{qin_2025_supervised} or adopting PPO-style clipped surrogates \citep{zhu_2025_proximal}. However, these methods typically require a reference model, imposing a procedural complexity that aligns them more closely with RL paradigms than the simplicity of conventional SFT.

In a parallel line of inquiry, other research focuses on directly modifying the SFT loss without a reference model. For example, DFT \citep{wu_2025_generalization} rescales the SFT objective at each token by its probability. Building on this, \citet{li_2025_beyond} proposed a unified framework for designing loss objectives, demonstrating that DFT can be considered a special case within their formulation. Following this direction, our proposed method, RIFT, refines the loss function to enhance performance while preserving the simplicity of standard SFT.

\begin{figure*}[htbp] 
    \centering 
    \includegraphics[width=0.98\linewidth]{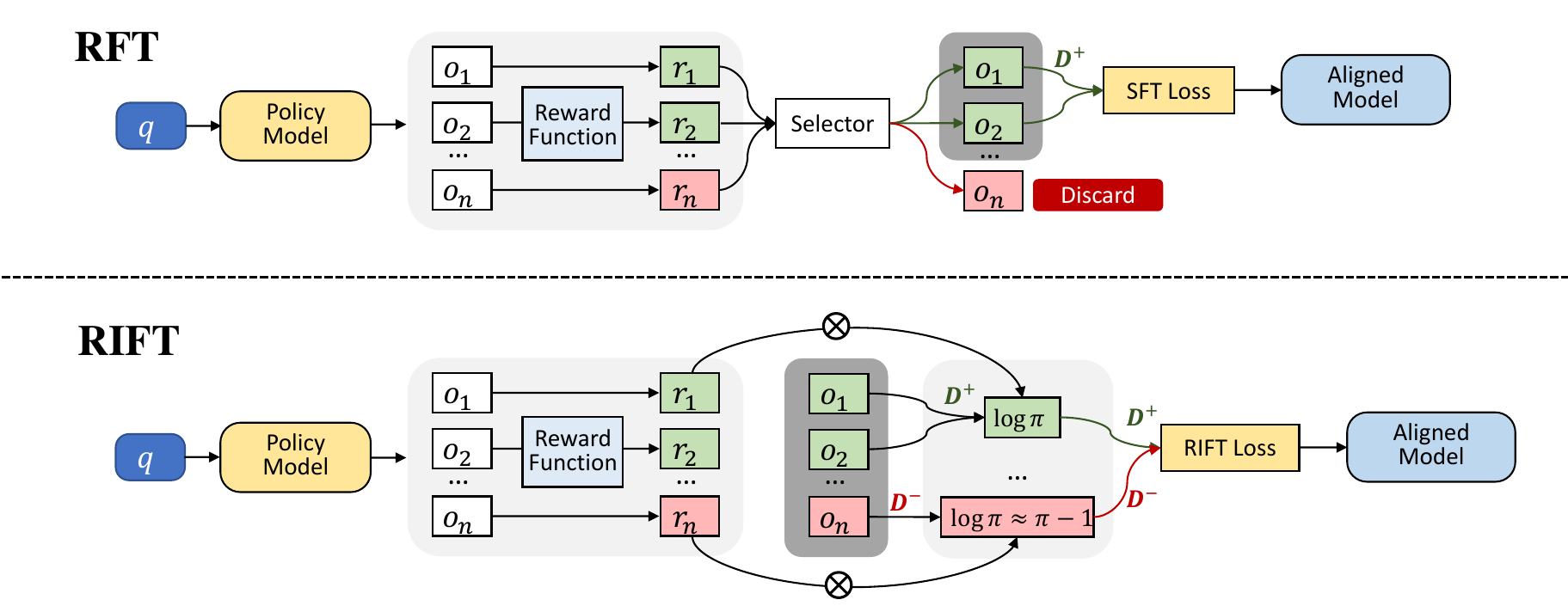}
    \caption{A comparative overview of RFT and RIFT. Unlike RFT rejects negative samples and only trains on positive ones, RIFT repurposes negative samples through a unified reward-informed loss. To ensure stable optimization, a linear surrogate is applied to negative samples to prevent loss collapse.} 
    \label{fig:model_arch}
\end{figure*}

\section{Methodology} \label{sec:methodology}

In this section, we present the theoretical framework and methodology of RIFT (Reward-Informed Fine-Tuning), a generalization of SFT that explicitly leverages mixed-quality demonstrations, i.e., samples with both positive and negative trajectories. An overview of the RIFT framework is depicted in Figure \ref{fig:model_arch}. 
In particular, we address the core challenge of leveraging negative-reward samples without compromising training stability.

% We begin by formulating a generalized signed-weighted objective, where samples are weighted by their rewards (positive or negative). However, directly minimizing the likelihood of negative-reward samples results in a critical optimization pathology: the unconstrained likelihood of negative-reward samples leads to unbounded gradients and numerical instability. To stabilize training, we apply a linearized probability surrogate to negative samples, regularizing suppression while ensuring boundedness and numerical stability. The resulting framework enables a unified loss function that jointly incorporates both positive and negative samples, weighted by their respective rewards.

% We first formulate the generalized signed-weighted objective and identify a critical optimization pathology: the unconstrained minimization of likelihood for negative samples leads to unbounded gradients and numerical collapse. To mitigate this, we introduce a linearized probability surrogate for negative samples. This approach regularizes the suppression mechanism, guaranteeing theoretical boundedness and numerical stability. 

\subsection{Preliminaries: Generalized Signed-Weighted Objective}

% \lzh{unnatural. explain why the extension formulation is in this form. }
Standard SFT relies on Maximum Likelihood Estimation (MLE) over high-quality demonstrations, effectively assigning uniform positive weight to all training samples. However, when both positive and negative feedback are available, it is natural to extend MLE by weighting each sample proportionally to its reward: positive rewards encourage likelihood increase, while negative rewards suppress undesirable outputs. This leads to a generalized signed-weighted objective.
% We begin by formulating a generalized signed-weighted objective, where samples are weighted by their rewards (positive or negative).
% Standard SFT operates on the principle of Maximum Likelihood Estimation (MLE) over a dataset of high-quality demonstrations. To generalize this to scenarios involving both positive and negative feedback, we consider a signed-weighted objective.

Let $\mathcal{D} = \{(x, y, r)\}$ be a dataset where $x$ denotes the input, $y$ the response sampled from the data distribution $\pi_{ref} (\cdot | x)$, and $r: \mathcal{X} \times \mathcal{Y} \to \mathbb{R}$ a scalar reward signal indicating the quality of the response. We partition the response space into positive samples $\mathcal{D}^+ = \{(x, y) \mid r(x, y) > 0\}$ and negative samples $\mathcal{D}^- = \{(x, y) \mid r(x, y) < 0\}$.

\begin{definition}[Naive Signed-Weighted Loss] \label{def:naive_loss}
The naive signed-weighted loss function $\mathcal{L}_{\text{naive}}$ for a parameterized policy $\pi_\theta$ is defined as the expectation of the reward-weighted log-likelihood:
\begin{equation} \label{eq:naive_loss}
\mathcal{L}_{\text{naive}}(\theta) := - \mathbb{E}_{(x, y, r) \sim \mathcal{D}} \left[ r \cdot \log \pi_\theta(y \mid x) \right].
\end{equation}
\end{definition}

The optimization dynamics of Eq.~\eqref{eq:naive_loss} are determined by the sign of $r$:
\begin{itemize}[leftmargin=1.0em, parsep=0pt, topsep=2pt]
\item \textbf{Positive Reinforcement ($r > 0$):} Minimizing $\mathcal{L}_{\text{naive}}$ is equivalent to \textbf{maximizing} $\log \pi_\theta(y|x)$, aligning with the standard SFT objective to promote desirable responses.
\item \textbf{Negative Suppression ($r < 0$):} Minimizing $\mathcal{L}_{\text{naive}}$ is equivalent to \textbf{minimizing} $\log \pi_\theta(y|x)$, theoretically suppressing the generation of undesirable responses.
\end{itemize}

\subsection{Theoretical Analysis of Instability}

While the naive formulation provides a unified view of reinforcement and suppression, it is ill-posed for negative weights due to the asymptotic behavior of the logarithm function.

\begin{theorem}[Gradient Explosion and Unboundedness] \label{thm:collapse}
Consider a negative sample $(x, y) \in \mathcal{D}^-$ with weight $r < 0$. The contribution to the gradient of the loss function $\mathcal{L}_{\text{naive}}$ with respect to the probability $\pi_\theta(y|x)$ is:
\begin{equation}
\frac{\partial \mathcal{L}_{\text{naive}}}{\partial \pi_\theta} = - \frac{r}{\pi_\theta(y|x)}.
\end{equation}
As the model successfully suppresses the negative sample (i.e., $\pi_\theta(y|x) \to 0^+$), the gradient magnitude approaches infinity:
\begin{equation}
\lim_{\pi_\theta \to 0^+} \left| \frac{\partial \mathcal{L}_{\text{naive}}}{\partial \pi_\theta} \right| = \infty.
\end{equation}
Furthermore, the objective function itself is unbounded from below, as $\lim_{p \to 0^+} (-r \log p) = -\infty$ for $r < 0$.
\end{theorem}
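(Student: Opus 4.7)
The plan is to treat the per-sample contribution $-r\log \pi_\theta(y\mid x)$ as a univariate function of the scalar probability $p := \pi_\theta(y\mid x) \in (0,1]$ and differentiate directly, since the expectation in Eq.~\eqref{eq:naive_loss} is linear and the statement concerns a single negative sample's contribution. Once this reduction is made, the two claims (gradient blow-up and unboundedness from below) become elementary consequences of the asymptotic behavior of $\log$ at $0^+$.

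First I would fix $(x,y)\in\mathcal{D}^-$ with $r<0$ and isolate its contribution $\ell(p) := -r\log p$. A one-line application of $\tfrac{d}{dp}\log p = 1/p$ yields $\ell'(p) = -r/p$, which is exactly the stated expression for $\partial \mathcal{L}_{\text{naive}}/\partial \pi_\theta$. Next I would pass to the limit $p\to 0^+$: since $r<0$ implies $-r>0$, we have $|\ell'(p)| = |r|/p \to \infty$, establishing the gradient-explosion limit.

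For the unboundedness claim, I would again use $\log p \to -\infty$ as $p\to 0^+$; multiplying by the positive constant $-r$ preserves the sign, so $\ell(p) = -r\log p \to -\infty$. Since the total loss is an expectation over samples and the contribution of a single negative sample can be driven to $-\infty$ while other contributions remain finite (for instance by holding the parameters fixed on all other examples and considering a direction of parameter variation that sends $\pi_\theta(y\mid x)\to 0$), the infimum of $\mathcal{L}_{\text{naive}}$ over $\theta$ is $-\infty$.

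The main obstacle here is not the calculus, which is immediate, but rather stating the result at the right level of abstraction: the derivative is most cleanly taken with respect to the probability $p$ rather than with respect to $\theta$, because the latter would require chain-ruling through the softmax and obscure the geometric picture. I would therefore present the argument in terms of $\ell(p)$ and only afterwards remark that, under mild non-degeneracy of the parameterization (the gradient $\nabla_\theta \pi_\theta(y\mid x)$ is nonzero along some direction), the $p\to 0^+$ regime is actually reachable by gradient descent, making the instability genuine rather than merely formal.
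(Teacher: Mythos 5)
Your approach mirrors the paper's proof: isolate the per-sample contribution $\ell(p)=-r\log p$, differentiate with respect to the scalar probability $p=\pi_\theta(y\mid x)$ to obtain $\ell'(p)=-r/p$, and pass to the limit $p\to 0^+$ for both the gradient blow-up and the divergence of the loss. The one place you should tighten is the justification that the contributions of all other samples stay finite. You write of ``holding the parameters fixed on all other examples and considering a direction of parameter variation that sends $\pi_\theta(y\mid x)\to 0$,'' but $\theta$ is shared across all samples, so it cannot literally be held fixed elsewhere while varied for one sample; as stated this is an informal gesture rather than a valid step. The paper closes this gap by positing, under a sufficient-expressivity assumption, a parameter sequence $\theta_n$ with $\pi_{\theta_n}(y_0\mid x_0)=\epsilon_n\to 0$ while every other relevant probability is kept bounded below by some fixed $\delta>0$, and then verifying that the residual sum of remaining terms is bounded (positive-reward terms are nonpositive because $\log\pi\le 0$; negative-reward terms are controlled by the floor $\log\delta$). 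Your closing remark about non-degeneracy of the parameterization is pointing at exactly this issue; making the $\delta$-floor and the resulting bound on the other terms explicit is what turns ``other contributions remain finite'' from an assertion into a verified step.
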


Theorem~\ref{thm:collapse} reveals a fundamental optimization pathology: the better the model performs at suppressing a negative sample, the more unstable the gradients become. In practice, this singularity leads to numerical overflow and catastrophic forgetting, where the optimizer focuses excessively on driving infinitesimal probabilities to absolute zero, destroying the feature representations learned from positive data.

\subsection{Reward Informed Fine-Tuning (RIFT)}

Theorem~\ref{thm:collapse} shows a key pathology: stronger suppression of negative samples leads to increasingly unstable gradients.
To address the instability, RIFT replace the logarithmic objective for negative samples with a bounded surrogate.
% that preserves the suppression monotonicities without the singularity at zero.

\subsubsection{Linear Probability Approximation}
Motivated by the first-order Taylor expansion of the logarithm function. For a probability $u \in (0, 1]$, the expansion around $u=1$ is given by:
\begin{equation}
\log u = \sum_{n=1}^\infty \frac{(-1)^{n+1}}{n} (u - 1)^n \approx u - 1.
\end{equation}
Although the linear surrogate $u - 1$ is not accurate near $u = 0$, we adopt it for its stable gradient: unlike $\log u$, its constant derivative avoids explosion as $u \to 0$, ensuring numerical stability while still suppressing negative samples.
% While this approximation is precise near $u=1$, we utilize the linear term $u$ as a surrogate optimization target for negative samples not for its approximation accuracy at $0$, but for its gradient properties. Unlike $\log u$, the linear function $u$ (and by extension $u-1$) has a constant derivative. This bounded gradient prevents the explosion issue when $u \to 0$, ensuring numerical stability while still providing a gradient direction that minimizes the probability of negative samples.

\subsubsection{The RIFT Objective}
RIFT decouples positive and negative samples: it retains the log objective for positives (preserving MLE signal) and uses a linear objective for negatives (ensuring stable gradients).
% Accordingly, RIFT decouples the treatment of positive and negative samples. We retain the logarithmic objective for positive samples to maintain the strong signal of MLE, while employing the linear probability objective for negative samples.

\begin{definition}[RIFT Loss] \label{def:RIFT_loss}
Let $\mathcal{D}^+$ and $\mathcal{D}^-$ be the disjoint sets of positive and negative samples. The RIFT loss function is defined as:
\begin{equation} \label{eq:RIFT_loss}
\begin{aligned}
\mathcal{L}_{\text{RIFT}}(\theta) := & - \mathbb{E}_{(x, y) \sim \mathcal{D}^+} \left[ r(x, y) \cdot \log \pi_\theta (y \mid x) \right] \\
& - \mathbb{E}_{(x, y) \sim \mathcal{D}^-} \left[ r(x, y) \cdot \pi_\theta (y \mid x) \right].
\end{aligned}
\end{equation}
\end{definition}
For $y \in \mathcal{D}^+$, we have $r(x, y) > 0$; thus, minimizing Eq.~\eqref{eq:RIFT_loss} increases $\pi_\theta(y \mid x)$, thereby enhancing positive samples.
In the second term, since $r(x, y) < 0$ for samples in $\mathcal{D}^-$, the term $-r(x, y)$ is positive. Thus, minimizing Eq.~\eqref{eq:RIFT_loss} requires minimizing $\pi_\theta(y|x)$, effectively suppressing the negative samples.

\begin{theorem}[Stability and Properties of RIFT] \label{thm:RIFT_properties}
The RIFT formulation satisfies the following theoretical properties:
\begin{enumerate}[leftmargin=1.5em, parsep=0pt, topsep=2pt]
\item[(\romannumeral1)] \textbf{Boundedness:} Since $\pi_\theta \in [0, 1]$, the loss contribution from any negative sample is bounded in $[r, 0]$, preventing the divergence to $-\infty$.
\item[(\romannumeral2)] \textbf{Reward Lower-Bound Maximization:} Let $\mathcal{J}(\theta) := \mathbb{E}_{y \sim \pi_\theta} [r(x, y)]$ denote the expected reward. Optimizing $\mathcal{L}_{\text{RIFT}}$ can be viewed as maximizing a surrogate lower bound of $\mathcal{J}(\theta)$.
% focusing the policy's probability mass on regions of positive reward while pruning regions of high negative reward.
\end{enumerate}
\end{theorem}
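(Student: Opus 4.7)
The plan is to treat the two claims separately: boundedness follows by direct substitution, while the lower-bound property reduces to a single convexity inequality applied sample-wise to the positive branch of the loss.

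For (i), I would observe that the only place the unbounded logarithm could cause trouble is the negative branch, which in $\mathcal{L}_{\text{RIFT}}$ has already been replaced by the linear surrogate. Concretely, for any $(x,y) \in \mathcal{D}^-$ with $r := r(x,y) < 0$, the per-sample reward-weighted term is $r \cdot \pi_\theta(y|x)$, and since $\pi_\theta(y|x) \in [0,1]$ with $r<0$, this quantity lies in $[r,0]$. Hence no choice of $\theta$ can drive the negative contribution below the finite lower bound $r$; contrast this with the naive term $r \log \pi_\theta$, which by Theorem~\ref{thm:collapse} escapes to $-\infty$ as $\pi_\theta \to 0^+$.

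For (ii), the central tool is the tangent-line inequality $\log p \le p - 1$ for $p \in (0,1]$, a direct consequence of the concavity of $\log$. Multiplying by $r > 0$ on each positive sample gives $-r \log \pi_\theta(y|x) \ge r - r\,\pi_\theta(y|x)$. Taking expectations over $\mathcal{D}^+$ and appending the (unchanged) negative branch yields
\begin{equation*}
\mathcal{L}_{\text{RIFT}}(\theta) \;\ge\; \mathbb{E}_{\mathcal{D}^+}[r] \;-\; \mathbb{E}_{(x,y)\sim\mathcal{D}}\!\left[r(x,y)\,\pi_\theta(y|x)\right],
\end{equation*}
so, setting $C := \mathbb{E}_{\mathcal{D}^+}[r]$ (a $\theta$-independent constant), one obtains $\mathbb{E}_{\mathcal{D}}[r\,\pi_\theta] \ge C - \mathcal{L}_{\text{RIFT}}(\theta)$. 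The left-hand side is, by linearity, the inner product between reward and learned probability evaluated on the data distribution --- that is, an importance-weighted estimate of $\mathcal{J}(\theta) = \mathbb{E}_{y \sim \pi_\theta}[r]$ up to the reference weights --- so minimizing $\mathcal{L}_{\text{RIFT}}$ is equivalent to maximizing a (scaled) surrogate lower bound on $\mathcal{J}(\theta)$.

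The step I expect to require the most care is the final identification of $\mathbb{E}_{(x,y)\sim\mathcal{D}}[r\,\pi_\theta(y|x)]$ with $\mathcal{J}(\theta)$: the rigorous inequality is with respect to the dataset average, and translating it to the on-policy reward $\mathcal{J}(\theta)$ requires either an on-policy sampling assumption or an explicit importance-weight normalization against the reference policy $\pi_{ref}$. My plan is to state the clean dataset-level inequality as the primary result and then sketch the surrogate correspondence that identifies it with $\mathcal{J}$ under these standard data-collection conventions; the key monotonicity (improving $-\mathcal{L}_{\text{RIFT}}$ raises the bound) survives any constant rescaling.
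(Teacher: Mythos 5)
Your part (i) matches the paper's argument in substance: the per-sample term $r\,\pi_\theta(y|x)$ with $r<0$ and $\pi_\theta\in[0,1]$ lies in $[r,0]$, which is exactly the boundedness the linear surrogate buys. The paper phrases its formal proof in terms of a global bound $|r|\le M$ rather than per-sample, but the mechanism is the same.

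Part (ii) has a genuine gap, and you have correctly flagged where it is --- the identification of $\mathbb{E}_{(x,y)\sim\mathcal{D}}[r\,\pi_\theta]$ with (a lower bound of) $\mathcal{J}(\theta)$ --- but the fix is not the cosmetic rescaling you suggest. Your inequality $\mathbb{E}_{\mathcal{D}}[r\,\pi_\theta] \ge C - \mathcal{L}_{\text{RIFT}}(\theta)$ is correct, but its left side does \emph{not} lower-bound $\mathcal{J}(\theta)$. Writing $\mathcal{J}(\theta)=\mathbb{E}_{\pi_{ref}}[\rho\,r]$ with $\rho=\pi_\theta/\pi_{ref}$, the gap term is
\begin{equation*}
\mathcal{J}(\theta)-\mathbb{E}_{\mathcal{D}}[r\,\pi_\theta] \;=\; \mathbb{E}_{\pi_{ref}}\!\left[\pi_\theta\, r\left(\tfrac{1}{\pi_{ref}}-1\right)\right],
\end{equation*}
and since $1/\pi_{ref}-1\ge0$, the negative-reward samples make this quantity \emph{negative}: on $\mathcal{D}^-$ the importance ratio amplifies the penalty, so $\rho\,r \le \pi_\theta r$ and $\mathbb{E}_{\mathcal{D}}[r\,\pi_\theta]$ can sit strictly \emph{above} $\mathcal{J}(\theta)$. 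Chaining your inequality with such an upper bound gives nothing about $\mathcal{J}(\theta)$.

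The paper's proof circumvents this by never introducing $\mathbb{E}_{\mathcal{D}}[r\,\pi_\theta]$. It works directly from the IS form $\mathcal{J}(\theta)=\mathbb{E}_{\pi_{ref}}[\rho\,r]$ and treats the two branches asymmetrically. On $\mathcal{D}^+$ it applies the tangent inequality to the \emph{ratio}, $\rho\ge 1+\log\rho$, which produces $r\log\pi_\theta$ (the actual positive RIFT term) plus a $\theta$-independent constant; your proposal instead applies $\log\pi_\theta\le\pi_\theta-1$ inside the loss, which linearizes the log term away rather than recovering it, and that is precisely why you end up bounding the wrong quantity. On $\mathcal{D}^-$ the paper invokes an additional assumption $\pi_{ref}(y|x)\ge C_2>0$ so that $\rho\,r \ge \pi_\theta r / C_2$ (the inequality flips because $r<0$), directly recovering the negative RIFT term up to the $1/C_2$ factor. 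Both ingredients --- putting the tangent inequality on $\rho$ rather than $\pi_\theta$, and a lower bound on $\pi_{ref}$ to control the negative branch --- are missing from your sketch, and the second one is not a ``normalization convention'' but a substantive hypothesis the statement needs.
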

By replacing the unbounded logarithmic penalty with a bounded linear penalty, RIFT provides stable incorporation of negative samples, ensuring that the suppression of undesirable content does not compromise the stability of the fine-tuning process.
% We provide the theoretical analysis in Appendix~\ref{sec:proof}.
\section{Experiments} \label{sec:exps}

\subsection{Experiment Details}
\paragraph{Base Models and Off-Policy Data Construction}
We evaluate RIFT against four established baselines. We first include supervised and rejection-sampling-based methods: SFT, DFT \cite{wu_2025_generalization}, and RFT \cite{yuan_2023_scaling}. Furthermore, as models can benefit from contrasting correct and incorrect outcomes, we also compare against DPO \cite{rafailov_2023_direct}, a representative off-policy RL method. 
% Unlike DPO, which relies on explicit preference pairs, RIFT leverages the intrinsic reward distribution among all self-generated responses, avoiding manual pair construction.
Experiments are conducted on Qwen2.5-Math (1.5B, 7B) \cite{DBLP:journals/corr/abs-2412-15115}, Qwen3-1.7B \cite{DBLP:journals/corr/abs-2505-09388}, and DeepSeek-R1-Distill-Qwen-1.5B \cite{DBLP:journals/corr/abs-2501-12948}, with the Qwen3 variant evaluated in non-thinking mode.

% its inclusion allows us to evaluate the efficiency of our reward-driven approach in utilizing positive and negative signals without requiring explicit pairing.

\begin{table}[htbp]
\centering
\large
\resizebox{\columnwidth}{!}{
\renewcommand{\arraystretch}{0.95}
\begin{tabular}{lccccc}
\toprule
Model &  \# Num. & \# Total & \parbox{1.2cm}{\centering \# Pos. \\ (r$>$0)} & \parbox{1.4cm}{\centering \# Neg. \\ (r$<$0)} & \% Pos.  \\
\midrule
\multicolumn{6}{c}{\textbf{\textit{Source: MATH Dataset}}} \\ 
\midrule
Qwen-2.5-Math-1.5B & 3,000 & 24,000 & 15,941 & 8,059 & 66.4\%  \\
Qwen-2.5-Math-7B & 3,000  & 24,000 & 16,933 & 7,067 & 70.6\%  \\
Qwen-3-1.7B & 3,000 & 24,000 & 20,386 & 3,614 & 84.9\%  \\
\midrule
\multicolumn{6}{c}{\textbf{\textit{Source: NuminaMath Dataset}}} \\ 
\midrule
Qwen-2.5-Math-1.5B & 4,000 & 32,000 & 11,235 & 20,765 & 35.1\%  \\
Qwen-2.5-Math-7B & 4,000  & 32,000 & 10,581 & 21,419 & 33.1\%  \\
Qwen-3-1.7B & 4,000 & 32,000 & 20,352 & 11,648 & 63.6\%  \\
\bottomrule
\end{tabular}
} 
\caption{Training data statistics across models and datasets, including counts of positive and negative samples and the positive sample ratio.
}
\vspace{-2mm}
\label{tab:data_stats}
\end{table}

\begin{table*}[t]
\centering
\small
\resizebox{\textwidth}{!}{
\renewcommand{\arraystretch}{0.95}
\begin{tabular}{llcccccccc}
\toprule
\textbf{Model} & \textbf{Method} & \textbf{GSM8K} & \textbf{MATH} & \textbf{Minerva} & \textbf{Olympiad} & \textbf{AIME24} & \textbf{AMC23} & \textbf{College} & \textbf{Avg.} \\
\midrule
\multicolumn{10}{c}{\cellcolor{white!10}\textbf{\textit{Post-Train on MATH Dataset}}} \\
\midrule
% --- Qwen-2.5-Math-1.5B (7行数据) ---
\multirow{7}{*}{\textbf{Qwen-2.5-Math-1.5B}}
& Base & 42.6 & 35.6 & 9.7 & 22.6 & 7.1 & 31.9 & 8.2 & 22.5 \\
\cdashline{2-10}
& SFT & 57.0 & 42.9 & 9.3 & 16.1 & 3.3 & 21.9 & 19.9 & 24.3 \\
& DFT & \textbf{76.8} & 53.9 & 15.6 & 19.1 & 4.2 & 25.6 & \textbf{36.4} & \underline{33.1} \\
& RFT & 48.8 & 37.2 & 13.5 & 22.5 & \textbf{8.8} & 33.4 & 15.2 & 25.6 \\
& DPO & 61.8 & 50.3 & 11.3 & 26.7 & 7.1 & 41.2 & 18.6 & 31.0 \\
\cdashline{2-10}
& \cellcolor[HTML]{E6E6FA}\textbf{RIFT} & \cellcolor[HTML]{E6E6FA}72.6 & \cellcolor[HTML]{E6E6FA}\textbf{59.6} & \cellcolor[HTML]{E6E6FA}\textbf{15.8} & \cellcolor[HTML]{E6E6FA}\textbf{28.8} & \cellcolor[HTML]{E6E6FA}7.1 & \cellcolor[HTML]{E6E6FA}\textbf{41.9} & \cellcolor[HTML]{E6E6FA}33.3 & \cellcolor[HTML]{E6E6FA}\textbf{\underline{37.0}} \textcolor{red!50!black}{\textbf{(+11.4)}} \\
\midrule
% --- Qwen-2.5-Math-7B (7行数据) ---
\multirow{7}{*}{\textbf{Qwen-2.5-Math-7B}}
& Base & 54.8 & 50.3 & 12.2 & 16.4 & 12.1 & 36.9 & 20.5 & 29.0 \\
\cdashline{2-10}
& SFT & 67.0 & 48.9 & 10.8 & 16.6 & 2.9 & 25.6 & 26.9 & 28.4 \\
& DFT & 83.3 & 58.5 & 16.9 & 20.9 & 4.6 & 33.8 & 35.2 & 36.2 \\
& RFT & 79.3 & 72.1 & 21.3 & 35.7 & 11.2 & \textbf{59.1} & 42.0 & \underline{45.8} \\
& DPO & 62.0 & 61.7 & \textbf{26.3} & 31.3 & 16.2 & 50.3 & 36.8 & 40.7 \\
\cdashline{2-10}
& \cellcolor[HTML]{E6E6FA}\textbf{RIFT} & \cellcolor[HTML]{E6E6FA}\textbf{84.6} & \cellcolor[HTML]{E6E6FA}\textbf{74.0} & \cellcolor[HTML]{E6E6FA}25.4 & \cellcolor[HTML]{E6E6FA}\textbf{36.1} & \cellcolor[HTML]{E6E6FA}\textbf{17.9} & \cellcolor[HTML]{E6E6FA}58.8 & \cellcolor[HTML]{E6E6FA}\textbf{43.8} & \cellcolor[HTML]{E6E6FA}\textbf{\underline{48.7}} \textcolor{red!50!black}{\textbf{(+2.9)}} \\
\midrule
% --- Qwen-3-1.7B (7行数据) ---
\multirow{7}{*}{\textbf{\shortstack{Qwen-3-1.7B \\ (Non-thinking mode)}}}
& Base & 77.0 & 42.3 & 19.1 & 13.4 & 1.2 & 22.5 & 30.8 & 29.5 \\
& SFT & 80.0 & 50.1 & 22.5 & 17.7 & 1.2 & 28.4 & 33.8 & 33.4 \\
& DFT & 84.4 & 57.0 & 27.7 & 21.7 & 4.2 & 31.2 & 36.3 & 37.5 \\
& RFT & 87.0 & 67.3 & 30.1 & 27.5 & 5.0 & 39.1 & 41.1 & \underline{42.4} \\
& DPO & 86.6 & 66.0 & 26.1 & 26.5 & 6.7 & \textbf{43.4} & 35.7 & 41.6 \\
\cdashline{2-10}
& \cellcolor[HTML]{E6E6FA}\textbf{RIFT} & \cellcolor[HTML]{E6E6FA}\textbf{87.3} & \cellcolor[HTML]{E6E6FA}\textbf{69.3} & \cellcolor[HTML]{E6E6FA}\textbf{32.7} & \cellcolor[HTML]{E6E6FA}\textbf{29.7} & \cellcolor[HTML]{E6E6FA}\textbf{7.9} & \cellcolor[HTML]{E6E6FA}41.6 & \cellcolor[HTML]{E6E6FA}\textbf{41.5} & \cellcolor[HTML]{E6E6FA}\textbf{\underline{44.3}} \textcolor{red!50!black}{\textbf{(+1.9)}} \\
\midrule
\multicolumn{10}{c}{\cellcolor{white!10}\textbf{\textit{Post-Train on NuminaMath Dataset}}} \\
\midrule
% --- Qwen-2.5-Math-1.5B (7行数据) ---
\multirow{7}{*}{\textbf{Qwen-2.5-Math-1.5B}}
& Base & 42.6 & 35.6 & 9.7 & 22.6 & 7.1 & 31.9 & 8.2 & 22.5 \\
\cdashline{2-10}
& SFT & 67.5 & 51.4 & 11.8 & 18.5 & 5.0 & 29.4 & 30.9 & 30.6 \\
& DFT & \textbf{77.4} & 57.8 & 17.3 & 25.2 & 6.7 & 31.2 & \textbf{34.2} & 35.7 \\
& RFT & 69.7 & 62.1 & 15.2 & \textbf{28.6} & 5.2 & 37.8 & 32.7 & \underline{35.9} \\
& DPO & 73.5 & 61.9 & 15.8 & 27.6 & 3.3 & 37.7 & 31.1 & 35.8 \\
\cdashline{2-10}
& \cellcolor[HTML]{E6E6FA}\textbf{RIFT} & \cellcolor[HTML]{E6E6FA} 75.2 & \cellcolor[HTML]{E6E6FA}\textbf{62.4} & \cellcolor[HTML]{E6E6FA}\textbf{18.1} & \cellcolor[HTML]{E6E6FA}27.8 & \cellcolor[HTML]{E6E6FA}\textbf{7.1} & \cellcolor[HTML]{E6E6FA}\textbf{40.0} & \cellcolor[HTML]{E6E6FA}33.5 & \cellcolor[HTML]{E6E6FA}\textbf{\underline{37.7}} \textcolor{red!50!black}{\textbf{(+1.4)}} \\
\midrule
% --- Qwen-2.5-Math-7B (7行数据) ---
\multirow{7}{*}{\textbf{Qwen-2.5-Math-7B}}
& Base & 54.8 & 50.3 & 12.2 & 16.4 & 12.1 & 36.9 & 20.5 & 29.0 \\
\cdashline{2-10}
& SFT & 71.1 & 60.9 & 21.8 & 32.9 & 9.2 & 43.4 & 37.0 & 39.5 \\
& DFT & \textbf{87.0} & 70.6 & 26.1 & \textbf{34.7} & 7.5 & 44.7 & 37.9 & 44.1  \\
& RFT & 83.3 & 69.8 & 21.3 & 31.3 & 11.2 & 58.8 & 42.0 & 45.4 \\
& DPO & 84.5 & 71.4 & 27.2 & 32.9 & 16.2 & 56.1 & 38.4 & \underline{46.7} \\
\cdashline{2-10}
& \cellcolor[HTML]{E6E6FA}\textbf{RIFT} & \cellcolor[HTML]{E6E6FA}86.3 & \cellcolor[HTML]{E6E6FA}\textbf{74.7} & \cellcolor[HTML]{E6E6FA}\textbf{28.9} & \cellcolor[HTML]{E6E6FA}34.1 & \cellcolor[HTML]{E6E6FA}\textbf{17.1} & \cellcolor[HTML]{E6E6FA}\textbf{62.2} & \cellcolor[HTML]{E6E6FA}38.6 & \cellcolor[HTML]{E6E6FA}\textbf{\underline{48.8}} \textcolor{red!50!black}{\textbf{(+3.4)}} \\
\midrule
% --- Qwen-3-1.7B (7行数据) ---
\multirow{7}{*}{\textbf{\shortstack{Qwen-3-1.7B \\ (Non-thinking mode)}}}
& Base & 77.0 & 42.3 & 19.1 & 13.4 & 1.2 & 22.5 & 30.8 & 29.5 \\
\cdashline{2-10}
& SFT & 84.7 & 62.2 & 22.9 & 24.3 & 2.5 & 37.8 & 34.1 & 38.4 \\
& DFT & 87.6 & 69.9 & 30.5 & 28.3 & 3.3 & 42.5 & 36.2 & 42.6 \\
& RFT & 86.4 & 62.3 & 25.5 & 24.3 & 3.3 & 36.6 & 34.6 & \underline{39.0} \\
& DPO & 86.8 & 66.7 & 27.7 & 26.3 & 6.7 & 40.8 & 36.0 & 41.6 \\
\cdashline{2-10}
& \cellcolor[HTML]{E6E6FA}\textbf{RIFT} & \cellcolor[HTML]{E6E6FA}\textbf{88.2} & \cellcolor[HTML]{E6E6FA}\textbf{69.2} & \cellcolor[HTML]{E6E6FA}\textbf{28.2} & \cellcolor[HTML]{E6E6FA}\textbf{28.7} & \cellcolor[HTML]{E6E6FA}\textbf{3.3} & \cellcolor[HTML]{E6E6FA}\textbf{46.6} & \cellcolor[HTML]{E6E6FA}\textbf{36.3} & \cellcolor[HTML]{E6E6FA}\textbf{\underline{42.9}} \textcolor{red!50!black}{\textbf{(+3.9)}} \\
\bottomrule
\end{tabular}
}
\caption{Mean@8 accuracy (\%) on 7 mathematical benchmarks. Best results are in \textbf{bold}. \textcolor{red!50!black}{\textbf{(+)}} indicates the absolute improvement of RIFT compared to RFT.}
\vspace{-5mm}
\label{tab:main_mean8}
\end{table*}

To construct off-policy training data, we curate two buffers from 3,000 randomly sampled MATH \cite{hendrycks_2021_measuring} and 4,000 NuminaMath \cite{numina_math_datasets} problems. For each problem, the base model generates 8 candidate responses, each assigned a reward based on final-answer correctness: positive reward for correct responses and negative for incorrect ones.
Following findings in MGPO \cite{xu2025tinymodelbiglogic}, we set larger magnitude reward ($+1.0$) for positive responses than for negative ones ($-0.2$) to emphasize successful reasoning traces. 
We analyze sensitivity to the negative reward in Section~\ref{subsec:reward_robustness}. The final buffers consist of $(x, y, r)$ triplets, with statistics in Table~\ref{tab:data_stats}.
% \lzh{explain why we use different scales for positive and negative examples. say that the ablation exp is in the latter part. or following mgpo https://arxiv.org/html/2511.06221v1, the magnitude for the positive examples is larger than the negativer ones.} 
Regarding learning strategies: SFT and DFT train on the seed problems with their ground-truth solutions; RFT uses only positive-reward responses, discarding all negative ones; DPO forms preference pairs by comparing model responses to ground-truth solutions, preferring the response when correct and the ground truth otherwise; In contrast, RIFT leverages the full training buffer, requiring neither data filtering nor explicit preference pairing.
% Regarding the learning strategies, SFT and DFT are trained on seed problems using their ground-truth solutions, while RFT focuses exclusively on positive-reward responses, discarding all incorrect attempts. DPO constructs preference pairs by comparing generated responses against ground-truth solutions, prioritizing the response if correct and the original solution otherwise. In contrast, RIFT optimizes its objective using the full training buffer, requiring neither data filtering nor explicit preference pairing.
% Crucially, RFT, DPO, RIFT use the same generated rollouts and the same binary reward function. This standardization ensures a fair and direct comparison of the respective learning strategies.

% A key observation is the significant difference in the ratio of positive samples (\%Correct) across the two datasets. While all models achieve over 66\% correctness on the MATH dataset, with Qwen-3-1.7B reaching 84.9\%, the correctness rate for the Qwen-2.5 models (both 1.5B and 7B) drops sharply to around 35\% when generating rollouts on the NuminaMath dataset. This contrast strongly indicates that the NuminaMath dataset presents a significantly higher degree of difficulty for the Qwen-2.5 family of math models.

\vspace{-2mm}
\paragraph{Implementation Details and Hyperparameter Settings}

We implement baselines using the built-in recipes of the MS-Swift \citep{zhao_2024_swift} framework, while RIFT is implemented via TRL \citep{vonwerra_2022_trl}. 
% For 7B models, we utilize DeepSpeed \cite{DBLP:conf/sc/RajbhandariRRH20, DBLP:conf/kdd/RasleyRRH20} ZeRO-3 in conjunction with HuggingFace Accelerate \cite{accelerate} to manage memory.
Unless otherwise specified, we adopt the default configurations provided by MS-Swift.
For candidate response generation, we sample 8 candidates per problem with a temperature of $0.7$ and a maximum sequence length of 4,096. During inference, all models maintain these settings with a top-$p$ of $0.8$ and a fixed random seed (0) for reproducibility. 
Optimization is carried out using the AdamW \cite{DBLP:conf/iclr/LoshchilovH19} optimizer coupled with a cosine learning rate scheduler featuring a 5\% warmup phase. 
The learning rate is set to $1 \times 10^{-5}$ for SFT and RFT, and a more conservative $2 \times 10^{-6}$ for RIFT and DPO to ensure stability during preference-based updates; 
% this value was empirically determined to be optimal for both methods. 
All experiments are conducted with a global batch size of 64 over three epochs.
% , applying gradient clipping with a norm threshold of 1.0.
% \lzh{explain why we use different learning rate. SFT and DPO: classic lr; our lr same as DPO: test as best. } 
% \textbf{Method-Specific Parameters:}
% For RIFT, the binary reward scheme assigns $+1.0$ for correct responses and $-0.2$ for incorrect responses. The robustness analysis of the reward design in RIFT is detailed in Section a.a.

\begin{table*}[t]
\centering
\small
\resizebox{\textwidth}{!}{
\renewcommand{\arraystretch}{0.95}
\begin{tabular}{llcccccccc} 
\toprule
\textbf{Model} & \textbf{Method} & \textbf{GSM8K} & \textbf{MATH} & \textbf{Minerva} & \textbf{Olympiad} & \textbf{AIME24} & \textbf{AMC23} & \textbf{College} & \textbf{Avg.} \\
\midrule
\multicolumn{10}{c}{\cellcolor{white!10}\textbf{\textit{Post-Train on Math Dataset}}} \\
\midrule
% --- Qwen-2.5-Math-1.5B (7行数据) ---
\multirow{7}{*}{\textbf{Qwen-2.5-Math-1.5B}}
& Base & 88.0 & 75.1 & 32.0 & 46.5 & 23.3 & 67.5 & 30.1 & 51.8 \\
\cdashline{2-10}
& SFT & 93.6 & 80.5 & 29.0 & 43.7 & 16.7 & 60.0 & 49.0 & 53.2 \\
& DFT & \textbf{94.5} & 76.7 & \textbf{38.2} & 43.0 & 20.0 & 55.0 & \textbf{52.4} & 54.3 \\
& RFT & 87.0 & 67.3 & 30.1 & 27.5 & 5.0 & 39.1 & 41.1 & 42.4 \\
& DPO & 92.9 & 83.2 & 33.8 & 48.7 & \textbf{33.3} & 72.5 & 45.4 & \underline{58.5} \\
\cdashline{2-10}
& \cellcolor[HTML]{E6E6FA}\textbf{RIFT} & \cellcolor[HTML]{E6E6FA}93.9 & \cellcolor[HTML]{E6E6FA}\textbf{85.9} & \cellcolor[HTML]{E6E6FA}37.1 & \cellcolor[HTML]{E6E6FA}\textbf{51.7} & \cellcolor[HTML]{E6E6FA}30.0 & \cellcolor[HTML]{E6E6FA}\textbf{80.0} & \cellcolor[HTML]{E6E6FA}51.9 & \cellcolor[HTML]{E6E6FA}\textbf{\underline{61.5}} \textcolor{red!50!black}{\textbf{(+19.1)}} \\
\midrule
% --- Qwen-2.5-Math-7B (7行数据) ---
\multirow{7}{*}{\textbf{Qwen-2.5-Math-7B}}
& Base & 92.1 & 83.6 & 36.4 & 42.5 & 30.0 & 70.0 & 48.2 & 50.7 \\
\cdashline{2-10}
&  SFT & 95.8 & 82.8 & 33.5 & 43.0 & 16.7 & 62.5 & 49.6 & 54.8 \\
&  DFT & 92.3 & 72.6 & 33.1 & 39.1 & 13.3 & 60.0 & 47.3 & 51.1 \\
& RFT & 95.5 & \textbf{90.2} & 46.7 & 58.1 & 33.3 & 85.0 & 54.2 & \underline{66.1} \\
& DPO & 94.6 & 88.9 & \textbf{52.2} & 56.1 & 33.3 & 82.5 & 55.1 & \underline{66.1} \\
\cdashline{2-10}
& \cellcolor[HTML]{E6E6FA}\textbf{RIFT} & \cellcolor[HTML]{E6E6FA}\textbf{96.4} & \cellcolor[HTML]{E6E6FA}90.1 & \cellcolor[HTML]{E6E6FA}49.6 & \cellcolor[HTML]{E6E6FA}\textbf{59.3} & \cellcolor[HTML]{E6E6FA}\textbf{36.7} & \cellcolor[HTML]{E6E6FA}\textbf{85.0} & \cellcolor[HTML]{E6E6FA}\textbf{55.5} & \cellcolor[HTML]{E6E6FA}\textbf{\underline{67.5}} \textcolor{red!50!black}{\textbf{(+1.4)}} \\
\midrule
% --- Qwen-3-1.7B (6行数据) ---
\multirow{6}{*}{\textbf{\shortstack{Qwen-3-1.7B \\ (Non-thinking mode)}}}
& Base & 90.6 & 67.1 & 34.9 & 31.4 & 10.0 & 45.0 & 41.9 & 45.8 \\
\cdashline{2-10}
& SFT & 92.7 & 74.1 & 38.2 & 36.4 & 10.0 & 47.5 & 44.8 & 49.1 \\
& DFT & 94.4 & 80.2 & 43.8 & 41.8 & 13.3 & 55.0 & 47.5 & 53.7 \\
& RFT & 94.3 & 84.6 & 42.3 & 40.0 & 20.0 & 60.0 & 48.2 & 55.6 \\
& DPO & 94.7 & 82.6 & 39.3 & 41.6 & \textbf{26.7} & \textbf{70.0} & 41.2 & \underline{56.7} \\
\cdashline{2-10}
& \cellcolor[HTML]{E6E6FA}\textbf{RIFT} & \cellcolor[HTML]{E6E6FA}\textbf{94.8} & \cellcolor[HTML]{E6E6FA}\textbf{85.6} & \cellcolor[HTML]{E6E6FA}\textbf{45.6} & \cellcolor[HTML]{E6E6FA}\textbf{45.8} & \cellcolor[HTML]{E6E6FA}20.0 & \cellcolor[HTML]{E6E6FA}65.0 & \cellcolor[HTML]{E6E6FA}\textbf{48.4} & \cellcolor[HTML]{E6E6FA}\textbf{\underline{57.9}}  \textcolor{red!50!black}{\textbf{(+2.3)}}\\
\midrule
\multicolumn{10}{c}{\cellcolor{white!10}\textbf{\textit{Post-Train on NuminaMath Dataset}}} \\
\midrule
% --- Qwen-2.5-Math-1.5B (7行数据) ---
\multirow{7}{*}{\textbf{Qwen-2.5-Math-1.5B}}
& Base & 88.0 & 75.1 & 32.0 & 46.5 & 23.3 & 67.5 & 30.1 & 51.8 \\
\cdashline{2-10}
& SFT & 94.0 & 82.8 & 36.8 & 45.3 & 16.7 & 70.0 & 54.2 & 57.1 \\
& DFT & 93.3 & 85.3 & 40.1 & 50.4 & 16.7 & 62.5 & 52.0 & 57.2 \\
& RFT & 93.6 & 86.1 & 38.2 & \textbf{52.4} & 23.3 & 75.0 & 45.4 & 59.1 \\
& DPO & 93.6 & 85.9 & 39.3 & 51.7 & 23.3 & 77.5 & 46.1 & \underline{59.6} \\
\cdashline{2-10}
& \cellcolor[HTML]{E6E6FA}\textbf{RIFT} & \cellcolor[HTML]{E6E6FA}\textbf{94.2}  & \cellcolor[HTML]{E6E6FA}\textbf{86.4} & \cellcolor[HTML]{E6E6FA}\textbf{41.9} & \cellcolor[HTML]{E6E6FA}49.5 & \cellcolor[HTML]{E6E6FA}\textbf{26.7} & \cellcolor[HTML]{E6E6FA}\textbf{80.0} & \cellcolor[HTML]{E6E6FA}\textbf{45.8} & \cellcolor[HTML]{E6E6FA}\textbf{\underline{60.6}} \textcolor{red!50!black}{\textbf{(+1.5)}}\\
\midrule
% --- Qwen-2.5-Math-7B (7行数据) ---
\multirow{7}{*}{\textbf{Qwen-2.5-Math-7B}}
& Base & 92.1 & 83.6 & 36.4 & 42.5 & 30.0 & 70.0 & 48.2 & 50.7 \\
\cdashline{2-10}
& SFT & 96.0 & 89.5 & 45.2 & \textbf{60.3} & 23.3 & 80.0 & 56.4 & 64.4 \\
& DFT & 91.7 & 81.8 & 37.5 & 48.7 & 16.7 & 62.5 & 42.7 & 54.5 \\
& RFT & 95.5 & 90.1 & 49.6 & 56.1 & 33.3 & 82.5 & 51.9 & 65.6 \\
& DPO & 95.8 & 90.2 & 52.2 & 58.1 & 36.7 & 85.0 & 54.2 & \underline{67.5} \\
\cdashline{2-10}
\cdashline{2-10}
& \cellcolor[HTML]{E6E6FA}\textbf{RIFT} & \cellcolor[HTML]{E6E6FA}\textbf{96.3}  & \cellcolor[HTML]{E6E6FA}\textbf{90.6} & \cellcolor[HTML]{E6E6FA}\textbf{53.3} & \cellcolor[HTML]{E6E6FA}58.4 & \cellcolor[HTML]{E6E6FA}\textbf{43.3} & \cellcolor[HTML]{E6E6FA}\textbf{87.5} & \cellcolor[HTML]{E6E6FA}48.3 & \cellcolor[HTML]{E6E6FA}\textbf{\underline{68.2}} \textcolor{red!50!black}{\textbf{(+2.6)}} \\
\midrule
% --- Qwen-3-1.7B (7行数据) ---
\multirow{6}{*}{\textbf{\shortstack{Qwen-3-1.7B \\ (Non-thinking mode)}}}
& Base & 90.6 & 67.1 & 34.9 & 31.4 & 10.0 & 45.0 & 41.9 & 45.8 \\
\cdashline{2-10}
& SFT & 93.5 & 81.3 & 34.9 & 41.2 & 10.0 & 67.5 & 40.8 & 52.7 \\
& DFT & 93.4 & 81.6 & 39.3 & \textbf{43.4} & \textbf{16.7} & 67.5 & 41.0 & \underline{54.7} \\
& RFT & 94.8 & 81.3 & 38.6 & 41.0 & 13.3 & 62.5 & 40.7 & 53.2 \\
& DPO & 94.1 & 81.6 & 38.6 & 41.6 & 13.3 & 62.5 & 41.2 & 53.3 \\
\cdashline{2-10}
& \cellcolor[HTML]{E6E6FA}\textbf{RIFT} & \cellcolor[HTML]{E6E6FA}\textbf{94.9} & \cellcolor[HTML]{E6E6FA}\textbf{85.6} & \cellcolor[HTML]{E6E6FA}\textbf{42.6} & \cellcolor[HTML]{E6E6FA}43.1 & \cellcolor[HTML]{E6E6FA}13.3 & \cellcolor[HTML]{E6E6FA}\textbf{70.0} & \cellcolor[HTML]{E6E6FA}\textbf{41.8} & \cellcolor[HTML]{E6E6FA}\textbf{\underline{55.9}} \textcolor{red!50!black}{\textbf{(+2.7)}} \\
\bottomrule
\end{tabular}
}
\caption{Pass@8 accuracy (\%) on 7 mathematical benchmarks. Best results are in \textbf{bold}. \textcolor{red!50!black}{\textbf{(+)}} indicates the absolute improvement of RIFT compared to RFT.}
\label{tab:main_pass8}
\vspace{-5mm}
\end{table*}

\vspace{-2mm}
\paragraph{Evaluation Benchmarks and Metrics}
Following prior studies, we adopt mathematical tasks as our primary testbed. Specifically, we evaluate on seven math benchmarks: GSM8K \cite{cobbe_2021_gsm8k}, MATH \cite{hendrycks_2021_measuring}, Minerva Math \cite{DBLP:conf/nips/LewkowyczADDMRS22}, Olympiad Bench \cite{DBLP:conf/nips/HuangWXLZXFYCY024}, AIME 2024 \cite{maa2024aime}, AMC 2023 \cite{maa2023amc}, and College Math \cite{DBLP:conf/iclr/HendrycksBBZMSS21}. 
We use the standardized Qwen2.5-Math-Eval pipeline \cite{yang_2024_qwen2} and report Mean@8 and Pass@8.

\subsection{Main Results}
\paragraph{Mean@8 Performance}
Table~\ref{tab:main_mean8} reports Mean@8 accuracy across seven mathematical reasoning benchmarks. RIFT consistently achieves the highest average performance in all settings, surpassing SFT, DFT, RFT, and DPO without requiring explicit preference pairs or data filtering. Our analysis yields the following findings:

\textbf{(1) SFT and DFT: limited OOD generalization.} 
Trained solely on MATH, SFT and DFT underperform the base model on harder OOD tasks (e.g., DFT: 19.1 vs. 22.6 on Olympiad; 4.6 vs. 12.1 on AIME24), but recover with NuminaMath whose distribution better aligns with the benchmarks. In contrast, by leveraging mixed-reward responses, RIFT consistently outperforms the base across all benchmarks, even under MATH-only training.

\textbf{(2) RFT scales with model capacity.} 
On Qwen-Math-1.5B, RFT underperforms DPO (25.6 vs. 31.0), but surpasses it on Qwen-Math-7B (45.8 vs. 40.7), indicating that RFT requires sufficient high-quality positive samples for self-improvement. However, RIFT stabilizes the refinement process even when self-generation quality is moderate.
% RFT yields limited gains on weak models (Qwen-2.5 1.5B avg: 25.6, only +1.3 over SFT) while its performance significantly improves with capable models (Qwen-2.5 7B: +19.7 over SFT), indicating its reliance on high-quality rollouts for effective learning. Crucially, RIFT consistently surpasses RFT (e.g., +2.9 on Qwen-2.5 7B and +1.9 on Qwen-3 1.7B). By integrating negative samples to explicitly learn the distinction between correct and incorrect outputs, RIFT stabilizes the refinement process even when self-generation quality is moderate.

\begin{table}[htbp]
\centering
\large
\resizebox{\columnwidth}{!}{
\renewcommand{\arraystretch}{0.95}
\begin{tabular}{lccc}
\toprule
Model & \# Num. & \# Mixed-Reward Num. & \% Mixed \\
\midrule
\multicolumn{4}{c}{\textbf{\textit{Source: MATH Dataset}}} \\ 
\midrule
Qwen-2.5-Math-1.5B & 3,000 & 2,541 & 84.7\% \\
Qwen-2.5-Math-7B   & 3,000 & 1,947 & 64.9\% \\
Qwen-3-1.7B & 3,000 & 971 & 32.4\% \\
\midrule
\multicolumn{4}{c}{\textbf{\textit{Source: NuminaMATH Dataset}}} \\ 
\midrule
Qwen-2.5-Math-1.5B & 4,000 & 2,060 & 51.5\% \\
Qwen-2.5-Math-7B   & 4,000 & 2,305 & 57.6\% \\
Qwen-3-1.7B & 4,000 & 3,462 & 86.6\% \\
\bottomrule
\end{tabular}
} 
\caption{Fraction of problems with mixed correct and incorrect responses (out of 8) per model and dataset.}
\vspace{-3mm}
\label{tab:data_stats_mixed}
\end{table}

% Training solely on MATH problems leads to severe distributional overfitting in both SFT and DFT, drastically undermining their generalization to unseen, more challenging tasks. For example, DFT underperforms the base model on Olympiad-level problems (19.1 vs. 22.6 for Qwen2.5-1.5B) and suffers a sharp regression on AIME24 (4.6 vs. 12.1 for Qwen2.5-7B). In contrast, our RIFT method, by explicitly leveraging both high- and low-reward rollouts, achieves consistently stronger generalization, outperforming the base model across all evaluated benchmarks.

\begin{table*}[!htbp]
\centering
\small
\resizebox{\textwidth}{!}{
\begin{tabular}{llcccccccc}
\toprule
\textbf{Model} & \textbf{Method} & \textbf{GSM8K} & \textbf{MATH} & \textbf{Minerva} & \textbf{Olympiad} & \textbf{AIME24} & \textbf{AMC23} & \textbf{College} & \textbf{Avg.} \\
\midrule
% --- DeepSeek-R1-Distill-Qwen-1.5B (5行数据) ---
\multirow{6}{*}{\textbf{\shortstack{DeepSeek-R1-Distill-Qwen-1.5B \\ (Mean@8)}}}
& Base & 80.5 & 70.4 & 19.5 & 30.2 & 12.9 & 47.5 & 39.6 & 43.0 \\
\cdashline{2-10}
& SFT & 50.5 & 38.2 & 10.8 & 9.7 & 0.4 & 14.4 & 25.2 & 21.3 \\
& DFT & 76.3 & 70.6 & \textbf{23.5} & 30.1 & 13.3 & 42.5 & 37.8 & 42.0 \\
& RFT & 63.6 & 63.5 & 14.7 & 27.4 & 13.8 & 48.1 & 33.9 & 37.9 \\
& DPO & 80.9 & 69.2 & 16.9 & 28.8 & \textbf{14.6} & \textbf{50.3} & 39.2 & \underline{42.8} \\
\cdashline{2-10}
& \cellcolor[HTML]{E6E6FA}\textbf{RIFT} & \cellcolor[HTML]{E6E6FA}\textbf{82.1} & \cellcolor[HTML]{E6E6FA}\textbf{71.1} & \cellcolor[HTML]{E6E6FA}22.3 & \cellcolor[HTML]{E6E6FA}\textbf{30.3} & \cellcolor[HTML]{E6E6FA}13.3 & \cellcolor[HTML]{E6E6FA}48.8 & \cellcolor[HTML]{E6E6FA}\textbf{40.1} & \cellcolor[HTML]{E6E6FA}\textbf{\underline{44.0}} \textcolor{red!50!black}{\textbf{(+6.1)}} \\
\midrule
% --- DeepSeek-R1-Distill-Qwen-1.5B (5行数据) ---
\multirow{6}{*}{\textbf{\shortstack{DeepSeek-R1-Distill-Qwen-1.5B \\ (Pass@8)}}}
& Base & 95.1 & 89.8 & 36.4 & 40.9 & 33.3 & 70.0 & 50.2 & 59.4 \\
\cdashline{2-10}
& SFT & 85.0 & 71.2 & 31.2 & 31.0 & 3.3 & 52.5 & 46.1 & 45.8 \\
& DFT & 93.8 & 89.4 & \textbf{43.4} & \textbf{50.7} & 30.0 & 77.5 & 48.2 & \underline{61.9} \\
& RFT & 92.6 & 89.2 & 31.2 & 47.9 & 30.0 & 75.0 & 50.0 & 59.4 \\
& DPO & 94.9 & 89.1 & 32.7 & 47.0 & 30.0 & 77.5 & 49.5 & 60.1 \\
\cdashline{2-10}
& \cellcolor[HTML]{E6E6FA}\textbf{RIFT} & \cellcolor[HTML]{E6E6FA}\textbf{95.2} & \cellcolor[HTML]{E6E6FA}\textbf{89.9} & \cellcolor[HTML]{E6E6FA}40.4 & \cellcolor[HTML]{E6E6FA}\textbf{50.7} & \cellcolor[HTML]{E6E6FA}\textbf{33.3} & \cellcolor[HTML]{E6E6FA}\textbf{82.5} & \cellcolor[HTML]{E6E6FA}\textbf{50.3} & \cellcolor[HTML]{E6E6FA}\textbf{\underline{63.2}} \textcolor{red!50!black}{\textbf{(+3.8)}} \\
\bottomrule
\end{tabular}
}
\caption{Mean@8 and Pass@8 accuracy (\%) on 7 mathematical benchmarks for DeepSeek-R1-Qwen-1.5B model. 
Best results are in \textbf{bold}. \textcolor{red!50!black}{\textbf{(+)}} indicates the absolute improvement of RIFT compared to RFT.
}
\label{tab:reasoner}
\end{table*}

\begin{table*}[!htbp]
\centering
\small
\resizebox{\textwidth}{!}{
\begin{tabular}{lcccccccc}
\toprule
\textbf{Reward Method} & \textbf{GSM8K} & \textbf{MATH} & \textbf{Minerva} & \textbf{Olympiad} & \textbf{AIME24} & \textbf{AMC23} & \textbf{College} & \textbf{Avg.} \\
\midrule
\rowcolor[HTML]{E6E6FA} \multicolumn{9}{c}{\textbf{\textit{Group Normalization Reward}}} \\
GPG-Mean & 68.8 $\pm$ 0.76 & 57.0 $\pm$ 0.49 & 14.8 $\pm$ 1.07 & 27.7 $\pm$ 2.02 & \textbf{10.0 $\pm$ 2.69} & 43.3 $\pm$ 4.25 & 25.4 $\pm$ 0.29 & 35.3 $\pm$ 1.65 \\
GPG-Scaled & \textbf{70.2 $\pm$ 0.67} & 57.5 $\pm$ 0.22 & 15.5 $\pm$ 0.19 & \textbf{28.6 $\pm$ 0.53} & \textbf{10.0 $\pm$ 2.69} & 43.8 $\pm$ 7.07 & \textbf{26.4 $\pm$ 0.91} & 36.0 $\pm$ 1.75\\
Gaussian Norm & 69.4 $\pm$ 0.46 & \textbf{57.8 $\pm$ 0.21} & \textbf{16.5 $\pm$ 2.36} & 28.6 $\pm$ 0.62 & \textbf{10.0 $\pm$ 2.69} & \textbf{48.3 $\pm$ 5.14} &  25.4 $\pm$ 0.66 & \textbf{36.6 $\pm$ 1.73} \\
\midrule
\rowcolor[HTML]{E6E6FA} \multicolumn{9}{c}{\textbf{\textit{Constant Negative Reward}}} \\
$r_{neg}=-0.2$ & \textbf{73.2 $\pm$ 0.29} & \textbf{59.6 $\pm$ 0.22} & \textbf{18.5 $\pm$ 1.14} & \textbf{28.8 $\pm$ 0.70} & \textbf{11.1 $\pm$ 1.91} & 43.3 $\pm$ 2.89 & 27.6 $\pm$ 0.29 & \textbf{37.5 $\pm$ 1.06} \\
$r_{neg}=-0.5$ & 72.0 $\pm$ 0.96 & 58.8 $\pm$ 0.91 & 12.4 $\pm$ 1.52 & 28.2 $\pm$ 1.31 & 10.0 $\pm$ 3.30 & \textbf{46.7 $\pm$ 2.29} & \textbf{29.6 $\pm$ 0.40} & 36.8 $\pm$ 1.53 \\
$r_{neg}=-0.8$ & 72.8 $\pm$ 1.00 & 59.4 $\pm$ 0.76 & 17.6 $\pm$ 1.15 & 27.3 $\pm$ 1.06 & 10.0 $\pm$ 3.82 & 45.0 $\pm$ 2.90 & 28.6 $\pm$ 0.38 & 37.2 $\pm$ 1.58 \\
\bottomrule
\end{tabular}
}
\caption{Performance comparison of reward methods across 7 mathematical reasoning benchmarks. Mean score ($\pm$ standard deviation) over three runs is reported. Best results are \textbf{bolded}.}
\vspace{-5mm}
\label{tab:reward_robust}
\end{table*}

\textbf{(3) DPO exhibits greater robustness.}
DPO achieves consistent improvements over the base model via pairwise preference learning (e.g. +8.5 on Qwen2.5-1.5B, +11.7 on Qwen2.5-7B, +8.0 on Qwen3-1.7B, trained on MATH), but its advantage over RFT diminishes with stronger base models.
% , as high-quality self-generated responses reduce preference discriminability. 
RIFT, in contrast, dominates across all scales by explicitly modeling reward signals, proving more effective than pair preference alignment alone.
% DPO achieves consistent improvements over the base model via pairwise preference learning (+8.5 on Qwen2.5-1.5B, +11.7 on Qwen2.5-7B, +8.0 on Qwen3-1.7B). However, as base model capability increases, its advantage over RFT diminishes: on Qwen2.5-7B, DPO (40.7) falls behind RFT (45.8), owing to weakened preference discriminability, as higher-quality rollouts narrow the margin between chosen and rejected responses. In contrast, RIFT dominates across all base models, demonstrating that explicit reward modeling on mixed-quality samples enables more effective optimization than binary preference alignment alone.

(4) \textbf{Mixed-reward responses drive RIFT gains.}
As Table~\ref{tab:data_stats_mixed} shows, on MATH the mixed-reward rate (rollouts with both correct and incorrect responses) drops with model scale (84.7\% to 32.4\%), and the gain of RIFT over RFT declines accordingly (+11.4 to +1.9). On NuminaMath, however, larger models yield higher mixed-reward rates (86.6\% for Qwen3-1.7B) and the largest RIFT gains (+3.9), confirming that RIFT benefits most when correct and incorrect responses coexist.
% As shown in Table \ref{tab:data_stats_mixed},
% on MATH, the mixed-reward rate declines with increasing capability (84.7\% to 32.4\% from Qwen-2.5-1.5B to Qwen3-1.7B), and RIFT’s gain over RFT correspondingly drops (+11.4 to +1.9). In contrast, on NuminaMath, where problem difficulty better aligns with stronger models, the trend reverses: Qwen3-1.7B achieves the highest mixed-reward rate (86.6\%) and its largest gain (+3.9), confirming that RIFT gains the most where correct and incorrect responses coexist.
% As Table~\ref{tab:data_stats_mixed} shows, the fraction of mixed-reward prompts declines as model capability increases, from 84.7\% for Qwen2.5-Math-1.5B to 64.9\% for Qwen2.5-Math-7B, and down to 32.4\% for Qwen3-1.7B. Correspondingly, the relative gain of RIFT over RFT decreases from +11.4 to +2.9 and +1.9 across the same models. This alignment confirms that the advantage of RIFT stems primarily from its ability to leverage mixed-reward prompts where corrective signals are strongest.

\paragraph{Pass@8 Performance}
Table~\ref{tab:main_pass8} reports Pass@8 (probability of more than 1 correct solution in 8 generations).  RIFT consistently achieves the highest and most stable Pass@8 across all settings.
\textbf{(1) RFT prioritizes correctness at the cost of solution diversity.}
While RFT achieves competitive Mean@8, its Pass@8 consistently lags behind DPO, as RFT relies solely on correct rollouts, yielding high-quality but low-diversity solutions. In contrast, DPO achieves higher Pass@8 by contrasting correct and incorrect outcomes, forcing the model to explore a wider strategy space. 
\textbf{(2) RIFT outperforms implicit pairwise comparisons.}
RIFT further surpasses DPO in Pass@8 (+3.0 on Qwen-2.5-Math-1.5B, +1.4 on Qwen-2.5-Math-7B, and +1.2 on Qwen-3 1.7B), showing that the explicit use of the reward signal enables more effective exploration than implicit pairwise comparison.
% \textbf{(1) RFT is constrained by negative-sample filtering.} RFT lags significantly when trained on MATH (42.4 vs. DPO 58.5 on Qwen-1.5B), it nearly closes the gap when trained on NuminaMath (59.1 vs. 59.6), where richer positive samples support more effective self-training.
% DPO mostly outperforms SFT and DFT in solution coverage. On Qwen-2.5-Math-1.5B trained with MATH, DPO achieves 58.5, surpassing SFT and DFT by 5.3 and 4.2 points, respectively, and retains this advantage on Qwen3-1.7B. 

% \subsection{Iterative RIFT}

% We further introduce Iterative RIFT, a novel hybrid learning paradigm situated between static off-policy training and active on-policy refinement. In this scheme, the policy generated after each epoch is immediately leveraged to replenish the off-policy buffer with fresh solutions, ensuring continuous policy evolution driven by the current model's capabilities. This process is enginee BrickRed to mitigate data staleness, allowing the policy to effectively learn from its most recent error distribution. This targeted error-aware refinement yields superior generalization, achieving an average accuracy of $\mathbf{[\dots]}\%$ (surpassing the static RIFT baseline) and demonstrating enhanced policy sharpness on complex tasks, such as $\mathbf{[\dots]}\%$ on Olympiad Bench. This confirms the significant advantage of actively co-evolving the training experience alongside the fine-tuning policy.

\begin{table*}[]
\small
\centering
\resizebox{\textwidth}{!}{%
\renewcommand{\arraystretch}{0.95}
\begin{tabular}{lcccccc}
\toprule
\multirow{2}{*}{\textbf{Method}} & \multicolumn{2}{c}{\textbf{Qwen-2.5-Math-1.5B}} & \multicolumn{2}{c}{\textbf{DeepSeek-R1-Distill-Qwen-1.5B}} & \multicolumn{2}{c}{\textbf{Qwen3-1.7B}} \\
\cmidrule(r){2-3} \cmidrule(r){4-5} \cmidrule(r){6-7}
& Peak Memory usage (GB) $\downarrow$ 
& Acc (\%) $\uparrow$ 
& Peak Memory usage (GB) $\downarrow$ 
& Acc (\%) $\uparrow$ 
& Peak Memory usage (GB) $\downarrow$ 
& Acc (\%) $\uparrow$ \\
\midrule
SFT  & \textbf{17.95} & 24.3 & \textbf{15.59} & 21.3 & \textbf{19.40} & 33.4 \\
DFT  & 26.90 & \underline{33.1} & 21.57 & 42.0 & 21.04 & 37.5 \\
RFT  & \underline{18.12} & 25.6 & \underline{18.15} & 37.9 & \underline{19.47} & \underline{42.4} \\
DPO  & 43.31 & 31.0 & 43.36 & \underline{42.8} & 41.24 & 41.6 \\
\hdashline
\rowcolor[HTML]{F2F2FF}
\textbf{RIFT} & 20.10 \textcolor{green!50!black}{\textbf{(+1.98)}} & \textbf{37.0} \textcolor{red!70!black}{\textbf{(+11.4)}} & 22.28 \textcolor{green!50!black}{\textbf{(+4.13)}} & \textbf{44.0} \textcolor{red!70!black}{\textbf{(+6.1)}} & 22.00 \textcolor{green!50!black}{\textbf{(+2.53)}} & \textbf{44.3} \textcolor{red!70!black}{\textbf{(+1.9)}} \\
\bottomrule
\end{tabular}%
}
\caption{Computational efficiency and performance trade-off. Accuracy (Acc) represents the mean@8 score averaged across 7 mathematical benchmarks. \textcolor{red!70!black}{(+)} and \textcolor{green!50!black}{(+)} indicate the absolute improvement in Acc and the absolute increase in peak computational memory of RIFT compared to RFT.}
\vspace{-5mm}
\label{tab:gpu_acc_tradeoff}
\end{table*}

\vspace{-1mm}
\subsection{Extending to Reasoner Model}
\vspace{-1mm}
To evaluate RIFT on models with intrinsic reasoning, we adopt DeepSeek-R1-Distill-Qwen-1.5B~\cite{DBLP:journals/corr/abs-2501-12948}, a distilled reasoner that generates explicit reflective traces, unlike non-thinking models that depend on prompt-based thinking. Given the extended reasoning traces, we set the maximum length of self-generated responses to 8,192. As Table~\ref{tab:reasoner} shows, this setting reveals key alignment failures in baseline methods.
% To investigate RIFT’s efficacy on models with complex reasoning capabilities, we evaluate on DeepSeek-R1-Distill-Qwen-1.5B, a model distilled from DeepSeek-R1 that generates explicit reflective reasoning traces (e.g., self-correction, hypothesis validation). Unlike conventional non-thinking models that rely on prompt-induced Chain-of-Thought (CoT), the reasoner model embeds multi-step reasoning directly into the generation process. As Table~\ref{tab:reasoner} shows, the reasoner model exposes critical alignment challenges for baseline methods:

\textbf{(1) SFT breaks the built-in reasoning.}
SFT on MATH severely degrades performance (21.3 vs. Base 43.0), indicating direct SFT on non-reflective data actively degrades the inherent capacity for step-by-step thinking of the reasoner model.
\textbf{(2) RFT and DPO only approach base performance.}
While RFT recovers Pass@8 (59.4), reaching levels comparable to the base model, it simultaneously degrades the Mean@8 (37.9 vs. Base 43.0). DPO, in contrast, maintains a comparable Mean@8 while achieving slightly higher Pass@8. 
% This outcome suggests that the self-generated rollout data utilized by RFT is less effective than the original high-quality reasoning traces of the reasoner model. Conversely, DPO's superior performance indicates that pairwise preference better preserve reasoning capability than rejection sampling.
(3) \textbf{RIFT delivers robust gains.} 
RIFT achieves the highest performance across all metrics: 44.0 Mean@8 (+1.0 over Base) and 63.2 Pass@8 (+3.8 over Base), representing the largest absolute improvement observed among all tested methods. Notably, RIFT significantly outperforms strong baselines like DPO by +1.2 in Mean@8 and +3.1 in Pass@8. 
% This simultaneous uplift confirms the effectiveness of RIFT on reasoning-capable models.
% : the explicit rewards successfully refine both the consistency of step-by-step reasoning (Mean@8) and the diversity of correct solutions (Pass@8). 

\subsection{Generalization to General Preference Alignment}

While RIFT is primarily motivated by tasks with verifiable rewards (e.g., mathematical reasoning), we further evaluate its generalizability on broader preference alignment. We fine-tune the Llama3.2-3B\cite{grattafiori2024llama} model on the UltraFeedback \cite{cuiultrafeedback} dataset and compare RIFT against the standard DPO baseline across three diverse benchmarks: IFEval\cite{zhou2023instruction}, TruthfulQA\cite{lin2022truthfulqa}, and HellaSwag\cite{zellers2019hellaswag}.

As shown in Table~\ref{tab:rlhf}, RIFT consistently outperforms DPO across all three benchmarks, demonstrating its effectiveness beyond verifiable reward tasks. 
Notably, RIFT achieves the improvement with approximately 50\% less GPU memory usage than DPO. 
We observe that the performance gains on RLHF tasks are more modest compared to mathematical reasoning. 
We attribute this to the inherent noise in preference datasets (e.g., annotator bias, model-generated preferences), which provide less clean supervision signals than the deterministic rewards available in math tasks. 
Nevertheless, RIFT's consistent improvements confirm its robustness even under noisy feedback conditions.

\begin{table}[h]
\centering
\small
\resizebox{0.49\textwidth}{!}{
\renewcommand{\arraystretch}{0.95}
\begin{tabular}{lcccc}
\toprule
Llama-3.2-3B & IFEval & TruthfulQA & Hellaswag & Avg. \\
\midrule
Base & 12.23 & 19.42 & 52.32 & 27.99 \\
DPO & 18.34 & 22.12 & 53.74 & 31.40 \\
RIFT & \textbf{18.63} & \textbf{24.94} & \textbf{55.89} & \textbf{33.15} \\
\bottomrule
\end{tabular}
} 
\caption{Zero-shot evaluation on general preference alignment tasks. RIFT outperforms DPO while using $\sim$50\% less memory.}
\vspace{-3mm}
\label{tab:rlhf}
\end{table}

\section{In-depth Analysis}

\subsection{Reward Strategy and Robustness Analysis}
\label{subsec:reward_robustness}

To assess how the reward design impacts the effectiveness of RIFT, we compare two distinct classes of reward strategies: (1) constant negative rewards ($r_{\text{neg}} \in \{-0.2, -0.5, -0.8\}$) and (2) group-wise normalization, which rescales rewards per problem based on its self-generated response set. We evaluate three normalization variants, $\hat{r}$ represents the reward $r$ after normalization:
% Specifically, we test three fixed values for the constant negative reward ($r_{\text{neg}} \in \{-0.2, -0.5, -0.8\}$) and three variants of the dynamic group normalization methods. 
% In the following variants, $\hat{r}$ represents the reward $r$ after normalization:

\begin{itemize}[leftmargin=1.0em, parsep=0pt, topsep=2pt]
    \item Gaussian Normalization: Standardizes rewards within each problem’s solution group:
    {\setlength{\abovedisplayskip}{5pt}
    \setlength{\belowdisplayskip}{5pt}
    \begin{equation}
        \hat{r} = {(r - \mu)}/{\sigma}
    \end{equation}
    }where $\mu$ and $\sigma$ are the mean and standard deviation of rewards for that problem.
    \item GPG Normalization (Mean-Centered): Adapts the advantage formulation of GPG~\cite{DBLP:journals/corr/abs-2504-02546}:
    {\setlength{\abovedisplayskip}{5pt}
    \setlength{\belowdisplayskip}{5pt}
    \begin{equation}
        \hat{r} = \alpha \cdot (r - \mu), \quad \alpha = {N^{+}} / {N},
    \end{equation}}$N^{+}$ and $N$ denote the numbers of correct and total responses in each group. The scaling factor $\alpha$ acts as an adaptive gain controller, amplifying the learning signal for problems with higher success rates.
    \item GPG Normalization (Raw-Scaled): Preserves the original reward sign and relative magnitude:
    {\setlength{\abovedisplayskip}{5pt}
    \setlength{\belowdisplayskip}{5pt}
    \begin{equation}
        \hat{r} = \alpha \cdot r, \quad \alpha = N^{+} / {N}.
    \end{equation}}
\end{itemize}

Table~\ref{tab:reward_robust} evaluates different reward mechanisms: 
\textbf{(1) Constant negative reward outperforms dynamic normalization.} Surprisingly, simple constant negative rewards consistently surpass group normalization methods, suggesting absolute reward can be more effective than relative intra-group rewards. 
\textbf{(2) RIFT is remarkably robust to negative reward magnitude.} Average performance remain stable within the $[-0.2, -0.8]$ range, indicating a consistent rejection signal enables stable and superior performance over RFT.

% \subsection{Ablation Study on Loss Design}

% \subsection{Plug-and-Play Refinement}

\begin{table*}[t]
\centering
\tiny
\resizebox{\textwidth}{!}{
\renewcommand{\arraystretch}{0.95}
\begin{tabular}{llcccccccc}
\toprule
\textbf{Model} & \textbf{Method} & \textbf{GSM8K} & \textbf{MATH} & \textbf{Minerva} & \textbf{Olympiad} & \textbf{AIME24} & \textbf{AMC23} & \textbf{College} & \textbf{Avg.} \\
\midrule
\multicolumn{10}{c}{\cellcolor{white!10}\textbf{\textit{Post-Train on MATH Dataset}}} \\
\midrule
% --- Qwen-2.5-Math-1.5B (7行数据) ---
\multirow{8}{*}{\textbf{\shortstack{Qwen-2.5-Math-1.5B \\ (Mean@8)}}}
& Base & 42.6 & 35.6 & 9.7 & 22.6 & 7.1 & 31.9 & 8.2 & 22.5 \\
\cdashline{2-10}
& SFT & 57.0 & 42.9 & 9.3 & 16.1 & 3.3 & 21.9 & 19.9 & 24.3 \\
& DFT & \textbf{76.8} & 53.9 & 15.6 & 19.1 & 4.2 & 25.6 & \textbf{36.4} & 33.1 \\
& RFT & 48.8 & 37.2 & 13.5 & 22.5 & \textbf{8.8} & 33.4 & 15.2 & 25.6 \\
& DPO & 61.8 & 50.3 & 11.3 & 26.7 & 7.1 & 41.2 & 18.6 & 31.0 \\
& SimPO & 65.8 & 50.0 & \textbf{18.0} & 18.4 & 5.8 & 20.6 & 35.0 & 30.5 \\
& KTO & 69.9 & 58.5 & 17.2 & 28.3 & 8.8 & 40.9 & 26.9 & \underline{35.8} \\
\cdashline{2-10}
& \cellcolor[HTML]{E6E6FA}\textbf{RIFT} & \cellcolor[HTML]{E6E6FA}72.6 & \cellcolor[HTML]{E6E6FA}\textbf{59.6} & \cellcolor[HTML]{E6E6FA}15.8 & \cellcolor[HTML]{E6E6FA}\textbf{28.8} & \cellcolor[HTML]{E6E6FA}7.1 & \cellcolor[HTML]{E6E6FA}\textbf{41.9} & \cellcolor[HTML]{E6E6FA}33.3 & \cellcolor[HTML]{E6E6FA}\textbf{\underline{37.0}} \\
\midrule
\multirow{8}{*}{\textbf{\shortstack{Qwen-2.5-Math-1.5B \\ (Pass@8)}}}
& Base & 88.0 & 75.1 & 32.0 & 46.5 & 23.3 & 67.5 & 30.1 & 51.8 \\
\cdashline{2-10}
& SFT & 93.6 & 80.5 & 29.0 & 43.7 & 16.7 & 60.0 & 49.0 & 53.2 \\
& DFT & \textbf{94.5} & 76.7 & 38.2 & 43.0 & 20.0 & 55.0 & 52.4 & 54.3 \\
& RFT & 87.0 & 67.3 & 30.1 & 27.5 & 5.0 & 39.1 & 41.1 & 42.4 \\
& DPO & 92.9 & 83.2 & 33.8 & 48.7 & \textbf{33.3} & 72.5 & 45.4 & 58.5 \\
& SimPO & \textbf{94.5} & 81.8 & 40.8 & 45.6 & 16.7 & 62.5 & \textbf{55.5} & 56.8 \\
& KTO & 93.5 & 82.1 & \textbf{44.9} & 50.2 & 30.0 & 77.5 & 42.8 & \underline{60.1}\\
\cdashline{2-10}
& \cellcolor[HTML]{E6E6FA}\textbf{RIFT} & \cellcolor[HTML]{E6E6FA}93.9 & \cellcolor[HTML]{E6E6FA}\textbf{85.9} & \cellcolor[HTML]{E6E6FA}37.1 & \cellcolor[HTML]{E6E6FA}\textbf{51.7} & \cellcolor[HTML]{E6E6FA}30.0 & \cellcolor[HTML]{E6E6FA}\textbf{80.0} & \cellcolor[HTML]{E6E6FA}51.9 & \cellcolor[HTML]{E6E6FA}\textbf{\underline{61.5}} \\
\bottomrule
\end{tabular}
}
\caption{Comparison with SimPO and KTO on Qwen2.5-Math-1.5B. Results show Mean@8 and Pass@8 accuracy (\%) across 7 mathematical benchmarks.}
\vspace{-2mm}
\label{tab:simpo_kto_mean8_pass8}
\end{table*}

\begin{table*}[h]
\centering
\tiny
\resizebox{0.8\textwidth}{!}{
\renewcommand{\arraystretch}{0.95}
\begin{tabular}{lccccccc}
\toprule
Qwen-2.5-Math-1.5B & SFT & DFT & RFT & DPO & SimPO & KTO & RIFT \\
\midrule
Peak Memory usage (GB) $\downarrow$ & \textbf{17.95} & 26.90 & \underline{18.12} & 43.31 & 21.40 & 42.93 & 20.10 \\
Average Accuracy (\%) $\uparrow$ & 24.30 & 33.10 & 25.60 & 31.00 & 30.50 & \underline{35.80} & \textbf{37.00} \\
\bottomrule
\end{tabular}
} 
\caption{Efficiency and performance trade-off against SimPO and KTO on Qwen2.5-Math-1.5B. Results show peak GPU memory usage and average mathematical reasoning accuracy.}
\vspace{-3mm}
\label{tab:simpo_kto_trade_off}
\end{table*}

\subsection{Computational Efficiency}

We evaluate the computational efficiency of RIFT by measuring peak computational memory usage during training alongside average accuracy across seven benchmarks. As demonstrated in Table \ref{tab:gpu_acc_tradeoff}, 
RIFT maintains a highly favorable performance-efficiency trade-off across all backbones. Specifically, while DPO incurs substantial memory overhead (exceeding 41 GB) due to the necessity of loading a reference model, RIFT requires only 20.1 to 22.3 GB. This represents nearly a 50\% reduction in peak VRAM usage compared to DPO, while consistently yielding higher accuracy (e.g., 44.3\% vs. 41.6\% on Qwen3-1.7B). Furthermore, the computational cost of RIFT is comparable to that of SFT and RFT, introducing only marginal overhead.

\subsection{Comparison with Efficient Alignment Baselines}

To further evaluate the efficiency and effectiveness of RIFT, we extend our comparison to include recent reference-free and low-memory alignment methods, specifically SimPO \cite{meng2024simpo} and KTO \cite{ethayarajh2024kto}. We conduct experiments using the Qwen2.5-Math-1.5B model to assess both performance gains and computational overhead.
As illustrated in Table \ref{tab:simpo_kto_mean8_pass8} and \ref{tab:simpo_kto_trade_off}, RIFT achieves a superior trade-off between mathematical reasoning performance and computational overhead. Although SimPO achieves memory savings similar to RIFT by removing the reference model, its performance gains over DPO are marginal, trailing RIFT by 6.5 points. Similarly, while KTO shows a notable improvement over DPO (+4.8 points), it still lags behind RIFT by 1.2 points. Crucially, unlike KTO which maintains a high memory usage, RIFT delivers superior performance with significantly higher resource efficiency.

% Acknowledgements should only appear in the accepted version.
% \section*{Acknowledgements}

% \textbf{Do not} include acknowledgements in the initial version of
% the paper submitted for blind review.

% If a paper is accepted, the final camera-ready version can (and
% usually should) include acknowledgements.  Such acknowledgements
% should be placed at the end of the section, in an unnumbe BrickRed section
% that does not count towards the paper page limit. Typically, this will 
% include thanks to reviewers who gave useful comments, to colleagues 
% who contributed to the ideas, and to funding agencies and corporate 
% sponsors that provided financial support.

\section{Conclusion}
\vspace{-1mm}

We propose RIFT, a simple yet effective post-training framework that leverages the full distribution of self-generated samples.
Unlike RFT, which discards valuable negative samples via hard thresholding, RIFT leverages both high- and low-reward trajectories. 
To ensure optimization stability, we introduce a principled loss formulation that effectively prevents training collapse during reward integration.
Extensive evaluation across seven mathematical reasoning benchmarks shows that RIFT consistently outperforms established baselines. This demonstrates that explicitly learning from mixed-quality data allows models to better internalize correct reasoning and failure modes. 
As a robust and data-efficient alignment method, RIFT enables scalable self-improvement without reliance on extensive expert-labeled data.

\section*{Limitations}

While RIFT demonstrates substantial gains in data efficiency and performance, several limitations remain for further refinement. 

First, as a reward-informed framework, RIFT is designed to effectively bridge the gap between reward signals and policy optimization. While it maximizes the utility of self-generated feedback, the performance upper-bound is naturally influenced by the discriminative power of the reward source. This is a shared challenge across all reward-driven alignment methodologies. Our results show that RIFT is robust to mixed-quality data, and exploring uncertainty-aware reward weighting to further mitigate potential feedback noise remains a compelling direction. 

Second, our evaluation primarily focuses on verifiable reasoning tasks characterized by objective success criteria and deterministic outcomes. Although mathematical benchmarks provide a high-fidelity environment to validate the core mechanics of RIFT, extending this framework to subjective or open-ended generative domains remains an open challenge. In such contexts, where correctness is harder to define, and the rewards are more difficult to measure, which might require a more complex reward setup.

Finally, RIFT currently treats each sample as a single unit. In complex, multi-step problems, a model might fail just because of one small mistake in a long, mostly correct path. At the moment, we do not look inside the steps to find these almost correct parts. Future versions of RIFT could use step-by-step rewards to learn from these partial successes, which could help the model improve even faster.

Regarding safety and ethical considerations, while the base models may occasionally generate reasoning errors, the practical risk is minimal as all model outputs are utilized strictly as internal training signals rather than for real-world deployment. Furthermore, all evaluations are conducted on standard mathematical benchmarks using objective metrics, ensuring a controlled experimental environment. In terms of manuscript preparation, an AI assistant is employed to enhance the linguistic clarity. However, all AI-generated suggestions are carefully reviewed and refined by the authors, ensuring that the final manuscript accurately reflects our own judgment and contains no harmful or misleading content.

% Bibliography entries for the entire Anthology, followed by custom entries
%\bibliography{anthology,custom}
% Custom bibliography entries only
\bibliography{custom.bib}

@article{zhang_2023_instruction,
  title={Instruction tuning for large language models: A survey},
  author={Zhang, Shengyu and Dong, Linfeng and Li, Xiaoya and Zhang, Sen and Sun, Xiaofei and Wang, Shuhe and Li, Jiwei and Hu, Runyi and Zhang, Tianwei and Wang, Guoyin and others},
  journal={ACM Computing Surveys},
  year={2023},
  publisher={ACM New York, NY}
}

@article{chung_2024_scaling,
  title={Scaling instruction-finetuned language models},
  author={Chung, Hyung Won and Hou, Le and Longpre, Shayne and Zoph, Barret and Tay, Yi and Fedus, William and Li, Yunxuan and Wang, Xuezhi and Dehghani, Mostafa and Brahma, Siddhartha and others},
  journal={Journal of Machine Learning Research},
  volume={25},
  number={70},
  pages={1--53},
  year={2024}
}

@inproceedings{mishra_2022_cross,
  title={Cross-task generalization via natural language crowdsourcing instructions},
  author={Mishra, Swaroop and Khashabi, Daniel and Baral, Chitta and Hajishirzi, Hannaneh},
  booktitle={Proceedings of the 60th Annual Meeting of the Association for Computational Linguistics (Volume 1: Long Papers)},
  pages={3470--3487},
  year={2022}
}

@article{zhou_2023_lima,
  title={Lima: Less is more for alignment},
  author={Zhou, Chunting and Liu, Pengfei and Xu, Puxin and Iyer, Srinivasan and Sun, Jiao and Mao, Yuning and Ma, Xuezhe and Efrat, Avia and Yu, Ping and Yu, Lili and others},
  journal={Advances in Neural Information Processing Systems},
  volume={36},
  pages={55006--55021},
  year={2023}
}

@article{cobbe_2021_gsm8k,
  title={Training Verifiers to Solve Math Word Problems},
  author={Cobbe, Karl and Kosaraju, Vineet and Bavarian, Mohammad and Chen, Mark and Jun, Heewoo and Kaiser, Lukasz and Plappert, Matthias and Tworek, Jerry and Hilton, Jacob and Nakano, Reiichiro and Hesse, Christopher and Schulman, John},
  journal={arXiv preprint arXiv:2110.14168},
  year={2021}
}

@misc{alpaca,
  author = {Rohan Taori and Ishaan Gulrajani and Tianyi Zhang and Yann Dubois and Xuechen Li and Carlos Guestrin and Percy Liang and Tatsunori B. Hashimoto },
  title = {Stanford Alpaca: An Instruction-following LLaMA model},
  year = {2023},
  publisher = {GitHub},
  journal = {GitHub repository},
  howpublished = {\url{https://github.com/tatsu-lab/stanford_alpaca}},
}

@article{hendrycks_2021_measuring,
  title={Measuring Mathematical Problem Solving With the MATH Dataset},
  author={Dan Hendrycks and Collin Burns and Saurav Kadavath and Akul Arora and Steven Basart and Eric Tang and Dawn Song and Jacob Steinhardt},
  journal={NeurIPS},
  year={2021}
}

@article{howard_2018_universal,
  title={Universal language model fine-tuning for text classification},
  author={Howard, Jeremy and Ruder, Sebastian},
  journal={arXiv preprint arXiv:1801.06146},
  year={2018}
}

@misc{dodge_2020_finetuning,
      title={Fine-Tuning Pretrained Language Models: Weight Initializations, Data Orders, and Early Stopping}, 
      author={Jesse Dodge and Gabriel Ilharco and Roy Schwartz and Ali Farhadi and Hannaneh Hajishirzi and Noah Smith},
      year={2020},
      eprint={2002.06305},
      archivePrefix={arXiv},
      primaryClass={cs.CL},
      url={https://arxiv.org/abs/2002.06305}, 
}

@article{ouyang_2022_training,
  title={Training language models to follow instructions with human feedback},
  author={Ouyang, Long and Wu, Jeffrey and Jiang, Xu and Almeida, Diogo and Wainwright, Carroll and Mishkin, Pamela and Zhang, Chong and Agarwal, Sandhini and Slama, Katarina and Ray, Alex and others},
  journal={Advances in neural information processing systems},
  volume={35},
  pages={27730--27744},
  year={2022}
}

@article{shao_2024_deepseekmath,
  title={Deepseekmath: Pushing the limits of mathematical reasoning in open language models},
  author={Shao, Zhihong and Wang, Peiyi and Zhu, Qihao and Xu, Runxin and Song, Junxiao and Bi, Xiao and Zhang, Haowei and Zhang, Mingchuan and Li, YK and Wu, Yang and others},
  journal={arXiv preprint arXiv:2402.03300},
  year={2024}
}

@article{guo_2025_deepseek,
  title={Deepseek-r1: Incentivizing reasoning capability in llms via reinforcement learning},
  author={Guo, Daya and Yang, Dejian and Zhang, Haowei and Song, Junxiao and Zhang, Ruoyu and Xu, Runxin and Zhu, Qihao and Ma, Shirong and Wang, Peiyi and Bi, Xiao and others},
  journal={arXiv preprint arXiv:2501.12948},
  year={2025}
}

@article{rafailov_2023_direct,
  title={Direct preference optimization: Your language model is secretly a reward model},
  author={Rafailov, Rafael and Sharma, Archit and Mitchell, Eric and Manning, Christopher D and Ermon, Stefano and Finn, Chelsea},
  journal={Advances in neural information processing systems},
  volume={36},
  pages={53728--53741},
  year={2023}
}

@article{yu_2025_dapo,
  title={Dapo: An open-source llm reinforcement learning system at scale},
  author={Yu, Qiying and Zhang, Zheng and Zhu, Ruofei and Yuan, Yufeng and Zuo, Xiaochen and Yue, Yu and Dai, Weinan and Fan, Tiantian and Liu, Gaohong and Liu, Lingjun and others},
  journal={arXiv preprint arXiv:2503.14476},
  year={2025}
}

@article{meng_2024_simpo,
  title={Simpo: Simple preference optimization with a reference-free reward},
  author={Meng, Yu and Xia, Mengzhou and Chen, Danqi},
  journal={Advances in Neural Information Processing Systems},
  volume={37},
  pages={124198--124235},
  year={2024}
}

@article{zheng_2025_stabilizing,
  title={Stabilizing Reinforcement Learning with LLMs: Formulation and Practices},
  author={Zheng, Chujie and Dang, Kai and Yu, Bowen and Li, Mingze and Jiang, Huiqiang and Lin, Junrong and Liu, Yuqiong and Yang, An and Zhou, Jingren and Lin, Junyang},
  journal={arXiv preprint arXiv:2512.01374},
  year={2025}
}

@misc{yuan_2023_scaling,
      title={Scaling Relationship on Learning Mathematical Reasoning with Large Language Models}, 
      author={Zheng Yuan and Hongyi Yuan and Chengpeng Li and Guanting Dong and Keming Lu and Chuanqi Tan and Chang Zhou and Jingren Zhou},
      year={2023},
      eprint={2308.01825},
      archivePrefix={arXiv},
      primaryClass={cs.CL},
      url={https://arxiv.org/abs/2308.01825}, 
}

@article{wu_2025_generalization,
  title={On the generalization of sft: A reinforcement learning perspective with reward rectification},
  author={Wu, Yongliang and Zhou, Yizhou and Ziheng, Zhou and Peng, Yingzhe and Ye, Xinyu and Hu, Xinting and Zhu, Wenbo and Qi, Lu and Yang, Ming-Hsuan and Yang, Xu},
  journal={arXiv preprint arXiv:2508.05629},
  year={2025}
}

@article{li_2025_beyond,
  title={Beyond Log Likelihood: Probability-Based Objectives for Supervised Fine-Tuning across the Model Capability Continuum},
  author={Li, Gaotang and Qiu, Ruizhong and Chen, Xiusi and Ji, Heng and Tong, Hanghang},
  journal={arXiv preprint arXiv:2510.00526},
  year={2025}
}

@article{qin_2025_supervised,
  title={Supervised fine tuning on curated data is reinforcement learning (and can be improved)},
  author={Qin, Chongli and Springenberg, Jost Tobias},
  journal={arXiv preprint arXiv:2507.12856},
  year={2025}
}

@article{zhu_2025_proximal,
  title={Proximal supervised fine-tuning},
  author={Zhu, Wenhong and Xie, Ruobing and Wang, Rui and Sun, Xingwu and Wang, Di and Liu, Pengfei},
  journal={arXiv preprint arXiv:2508.17784},
  year={2025}
}

@misc{zhao_2024_swift,
      title={SWIFT:A Scalable lightWeight Infrastructure for Fine-Tuning},
      author={Yuze Zhao and Jintao Huang and Jinghan Hu and Xingjun Wang and Yunlin Mao and Daoze Zhang and Zeyinzi Jiang and Zhikai Wu and Baole Ai and Ang Wang and Wenmeng Zhou and Yingda Chen},
      year={2024},
      eprint={2408.05517},
      archivePrefix={arXiv},
      primaryClass={cs.CL},
      url={https://arxiv.org/abs/2408.05517},
}

@misc{vonwerra_2022_trl,
  author = {Leandro von Werra and Younes Belkada and Lewis Tunstall and Edward Beeching and Tristan Thrush and Nathan Lambert and Shengyi Huang and Kashif Rasul and Quentin Gallouédec},
  title = {TRL: Transformer Reinforcement Learning},
  year = {2020},
  publisher = {GitHub},
  journal = {GitHub repository},
  howpublished = {\url{https://github.com/huggingface/trl}}
}

@article{yang_2024_qwen2,
  title={Qwen2 technical report},
  author={Yang, An and Yang, Baosong and Hui, Binyuan and Zheng, Bo and Yu, Bowen and Zhou, Chang and Li, Chengpeng and Li, Chengyuan and Liu, Dayiheng and Huang, Fei and others},
  journal={arXiv preprint arXiv:2407.10671},
  year={2024}
}

@article{DBLP:journals/corr/abs-2308-08747,
  author       = {Yun Luo and
                  Zhen Yang and
                  Fandong Meng and
                  Yafu Li and
                  Jie Zhou and
                  Yue Zhang},
  title        = {An Empirical Study of Catastrophic Forgetting in Large Language Models
                  During Continual Fine-tuning},
  journal      = {CoRR},
  volume       = {abs/2308.08747},
  year         = {2023},
  url          = {https://doi.org/10.48550/arXiv.2308.08747},
  doi          = {10.48550/ARXIV.2308.08747},
  eprinttype    = {arXiv},
  eprint       = {2308.08747},
  timestamp    = {Thu, 07 Sep 2023 21:16:34 +0200},
  biburl       = {https://dblp.org/rec/journals/corr/abs-2308-08747.bib},
  bibsource    = {dblp computer science bibliography, https://dblp.org}
}

@article{DBLP:journals/jmlr/ChungHLZTFL00BW24,
  author       = {Hyung Won Chung and
                  Le Hou and
                  Shayne Longpre and
                  Barret Zoph and
                  Yi Tay and
                  William Fedus and
                  Yunxuan Li and
                  Xuezhi Wang and
                  Mostafa Dehghani and
                  Siddhartha Brahma and
                  Albert Webson and
                  Shixiang Shane Gu and
                  Zhuyun Dai and
                  Mirac Suzgun and
                  Xinyun Chen and
                  Aakanksha Chowdhery and
                  Alex Castro{-}Ros and
                  Marie Pellat and
                  Kevin Robinson and
                  Dasha Valter and
                  Sharan Narang and
                  Gaurav Mishra and
                  Adams Yu and
                  Vincent Y. Zhao and
                  Yanping Huang and
                  Andrew M. Dai and
                  Hongkun Yu and
                  Slav Petrov and
                  Ed H. Chi and
                  Jeff Dean and
                  Jacob Devlin and
                  Adam Roberts and
                  Denny Zhou and
                  Quoc V. Le and
                  Jason Wei},
  title        = {Scaling Instruction-Finetuned Language Models},
  journal      = {J. Mach. Learn. Res.},
  volume       = {25},
  pages        = {70:1--70:53},
  year         = {2024},
  url          = {https://jmlr.org/papers/v25/23-0870.html},
  timestamp    = {Mon, 16 Sep 2024 17:07:54 +0200},
  biburl       = {https://dblp.org/rec/journals/jmlr/ChungHLZTFL00BW24.bib},
  bibsource    = {dblp computer science bibliography, https://dblp.org}
}

@article{DBLP:journals/corr/abs-2308-10792,
  author       = {Shengyu Zhang and
                  Linfeng Dong and
                  Xiaoya Li and
                  Sen Zhang and
                  Xiaofei Sun and
                  Shuhe Wang and
                  Jiwei Li and
                  Runyi Hu and
                  Tianwei Zhang and
                  Fei Wu and
                  Guoyin Wang},
  title        = {Instruction Tuning for Large Language Models: {A} Survey},
  journal      = {CoRR},
  volume       = {abs/2308.10792},
  year         = {2023},
  url          = {https://doi.org/10.48550/arXiv.2308.10792},
  doi          = {10.48550/ARXIV.2308.10792},
  eprinttype    = {arXiv},
  eprint       = {2308.10792},
  timestamp    = {Thu, 11 Jul 2024 20:24:50 +0200},
  biburl       = {https://dblp.org/rec/journals/corr/abs-2308-10792.bib},
  bibsource    = {dblp computer science bibliography, https://dblp.org}
}

@misc{xu2025tinymodelbiglogic,
      title={Tiny Model, Big Logic: Diversity-Driven Optimization Elicits Large-Model Reasoning Ability in VibeThinker-1.5B}, 
      author={Sen Xu and Yi Zhou and Wei Wang and Jixin Min and Zhibin Yin and Yingwei Dai and Shixi Liu and Lianyu Pang and Yirong Chen and Junlin Zhang},
      year={2025},
      eprint={2511.06221},
      archivePrefix={arXiv},
      primaryClass={cs.AI},
      url={https://arxiv.org/abs/2511.06221}, 
}

@inproceedings{DBLP:conf/icml/ChuZYTXSLL025,
  author       = {Tianzhe Chu and
                  Yuexiang Zhai and
                  Jihan Yang and
                  Shengbang Tong and
                  Saining Xie and
                  Dale Schuurmans and
                  Quoc V. Le and
                  Sergey Levine and
                  Yi Ma},
  title        = {{SFT} Memorizes, {RL} Generalizes: {A} Comparative Study of Foundation
                  Model Post-training},
  booktitle    = {Forty-second International Conference on Machine Learning, {ICML}
                  2025, Vancouver, BC, Canada, July 13-19, 2025},
  publisher    = {OpenReview.net},
  year         = {2025},
  url          = {https://openreview.net/forum?id=dYur3yabMj},
  timestamp    = {Fri, 05 Dec 2025 16:19:11 +0100},
  biburl       = {https://dblp.org/rec/conf/icml/ChuZYTXSLL025.bib},
  bibsource    = {dblp computer science bibliography, https://dblp.org}
}

@inproceedings{DBLP:conf/nips/Ouyang0JAWMZASR22,
  author       = {Long Ouyang and
                  Jeffrey Wu and
                  Xu Jiang and
                  Diogo Almeida and
                  Carroll L. Wainwright and
                  Pamela Mishkin and
                  Chong Zhang and
                  Sandhini Agarwal and
                  Katarina Slama and
                  Alex Ray and
                  John Schulman and
                  Jacob Hilton and
                  Fraser Kelton and
                  Luke Miller and
                  Maddie Simens and
                  Amanda Askell and
                  Peter Welinder and
                  Paul F. Christiano and
                  Jan Leike and
                  Ryan Lowe},
  editor       = {Sanmi Koyejo and
                  S. Mohamed and
                  A. Agarwal and
                  Danielle Belgrave and
                  K. Cho and
                  A. Oh},
  title        = {Training language models to follow instructions with human feedback},
  booktitle    = {Advances in Neural Information Processing Systems 35: Annual Conference
                  on Neural Information Processing Systems 2022, NeurIPS 2022, New Orleans,
                  LA, USA, November 28 - December 9, 2022},
  year         = {2022},
  url          = {http://papers.nips.cc/paper\_files/paper/2022/hash/b1efde53be364a73914f58805a001731-Abstract-Conference.html},
  timestamp    = {Mon, 08 Jan 2024 16:31:36 +0100},
  biburl       = {https://dblp.org/rec/conf/nips/Ouyang0JAWMZASR22.bib},
  bibsource    = {dblp computer science bibliography, https://dblp.org}
}

@inproceedings{DBLP:conf/iclr/SanhWRBSACSRDBX22,
  author       = {Victor Sanh and
                  Albert Webson and
                  Colin Raffel and
                  Stephen H. Bach and
                  Lintang Sutawika and
                  Zaid Alyafeai and
                  Antoine Chaffin and
                  Arnaud Stiegler and
                  Arun Raja and
                  Manan Dey and
                  M Saiful Bari and
                  Canwen Xu and
                  Urmish Thakker and
                  Shanya Sharma Sharma and
                  Eliza Szczechla and
                  Taewoon Kim and
                  Gunjan Chhablani and
                  Nihal V. Nayak and
                  Debajyoti Datta and
                  Jonathan Chang and
                  Mike Tian{-}Jian Jiang and
                  Han Wang and
                  Matteo Manica and
                  Sheng Shen and
                  Zheng Xin Yong and
                  Harshit Pandey and
                  Rachel Bawden and
                  Thomas Wang and
                  Trishala Neeraj and
                  Jos Rozen and
                  Abheesht Sharma and
                  Andrea Santilli and
                  Thibault F{\'{e}}vry and
                  Jason Alan Fries and
                  Ryan Teehan and
                  Teven Le Scao and
                  Stella Biderman and
                  Leo Gao and
                  Thomas Wolf and
                  Alexander M. Rush},
  title        = {Multitask Prompted Training Enables Zero-Shot Task Generalization},
  booktitle    = {The Tenth International Conference on Learning Representations, {ICLR}
                  2022, Virtual Event, April 25-29, 2022},
  publisher    = {OpenReview.net},
  year         = {2022},
  url          = {https://openreview.net/forum?id=9Vrb9D0WI4},
  timestamp    = {Tue, 18 Feb 2025 15:42:06 +0100},
  biburl       = {https://dblp.org/rec/conf/iclr/SanhWRBSACSRDBX22.bib},
  bibsource    = {dblp computer science bibliography, https://dblp.org}
}

@inproceedings{DBLP:conf/nips/KorbakEKD22,
  author       = {Tomasz Korbak and
                  Hady Elsahar and
                  Germ{\'{a}}n Kruszewski and
                  Marc Dymetman},
  editor       = {Sanmi Koyejo and
                  S. Mohamed and
                  A. Agarwal and
                  Danielle Belgrave and
                  K. Cho and
                  A. Oh},
  title        = {On Reinforcement Learning and Distribution Matching for Fine-Tuning
                  Language Models with no Catastrophic Forgetting},
  booktitle    = {Advances in Neural Information Processing Systems 35: Annual Conference
                  on Neural Information Processing Systems 2022, NeurIPS 2022, New Orleans,
                  LA, USA, November 28 - December 9, 2022},
  year         = {2022},
  url          = {http://papers.nips.cc/paper\_files/paper/2022/hash/67496dfa96afddab795530cc7c69b57a-Abstract-Conference.html},
  timestamp    = {Mon, 08 Jan 2024 16:31:34 +0100},
  biburl       = {https://dblp.org/rec/conf/nips/KorbakEKD22.bib},
  bibsource    = {dblp computer science bibliography, https://dblp.org}
}

@inproceedings{DBLP:conf/acl/HuangCWYLSYS24,
  author       = {Jianheng Huang and
                  Leyang Cui and
                  Ante Wang and
                  Chengyi Yang and
                  Xinting Liao and
                  Linfeng Song and
                  Junfeng Yao and
                  Jinsong Su},
  editor       = {Lun{-}Wei Ku and
                  Andre Martins and
                  Vivek Srikumar},
  title        = {Mitigating Catastrophic Forgetting in Large Language Models with Self-Synthesized
                  Rehearsal},
  booktitle    = {Proceedings of the 62nd Annual Meeting of the Association for Computational
                  Linguistics (Volume 1: Long Papers), {ACL} 2024, Bangkok, Thailand,
                  August 11-16, 2024},
  pages        = {1416--1428},
  publisher    = {Association for Computational Linguistics},
  year         = {2024},
  url          = {https://doi.org/10.18653/v1/2024.acl-long.77},
  doi          = {10.18653/V1/2024.ACL-LONG.77},
  timestamp    = {Sat, 31 May 2025 23:08:29 +0200},
  biburl       = {https://dblp.org/rec/conf/acl/HuangCWYLSYS24.bib},
  bibsource    = {dblp computer science bibliography, https://dblp.org}
}

@inproceedings{DBLP:conf/icml/0014LZYZZ025,
  author       = {Tao Feng and
                  Wei Li and
                  Didi Zhu and
                  Hangjie Yuan and
                  Wendi Zheng and
                  Dan Zhang and
                  Jie Tang},
  title        = {ZeroFlow: Overcoming Catastrophic Forgetting is Easier than You Think},
  booktitle    = {Forty-second International Conference on Machine Learning, {ICML}
                  2025, Vancouver, BC, Canada, July 13-19, 2025},
  publisher    = {OpenReview.net},
  year         = {2025},
  url          = {https://openreview.net/forum?id=iPDw3O6u3T},
  timestamp    = {Fri, 05 Dec 2025 16:19:12 +0100},
  biburl       = {https://dblp.org/rec/conf/icml/0014LZYZZ025.bib},
  bibsource    = {dblp computer science bibliography, https://dblp.org}
}

@inproceedings{DBLP:conf/icml/ChenDYJG24,
  author       = {Zixiang Chen and
                  Yihe Deng and
                  Huizhuo Yuan and
                  Kaixuan Ji and
                  Quanquan Gu},
  title        = {Self-Play Fine-Tuning Converts Weak Language Models to Strong Language
                  Models},
  booktitle    = {Forty-first International Conference on Machine Learning, {ICML} 2024,
                  Vienna, Austria, July 21-27, 2024},
  publisher    = {OpenReview.net},
  year         = {2024},
  url          = {https://openreview.net/forum?id=O4cHTxW9BS},
  timestamp    = {Mon, 02 Sep 2024 16:55:26 +0200},
  biburl       = {https://dblp.org/rec/conf/icml/ChenDYJG24.bib},
  bibsource    = {dblp computer science bibliography, https://dblp.org}
}

@article{DBLP:journals/corr/abs-2412-15115,
  author       = {An Yang and
                  Baosong Yang and
                  Beichen Zhang and
                  Binyuan Hui and
                  Bo Zheng and
                  Bowen Yu and
                  Chengyuan Li and
                  Dayiheng Liu and
                  Fei Huang and
                  Haoran Wei and
                  Huan Lin and
                  Jian Yang and
                  Jianhong Tu and
                  Jianwei Zhang and
                  Jianxin Yang and
                  Jiaxi Yang and
                  Jingren Zhou and
                  Junyang Lin and
                  Kai Dang and
                  Keming Lu and
                  Keqin Bao and
                  Kexin Yang and
                  Le Yu and
                  Mei Li and
                  Mingfeng Xue and
                  Pei Zhang and
                  Qin Zhu and
                  Rui Men and
                  Runji Lin and
                  Tianhao Li and
                  Tingyu Xia and
                  Xingzhang Ren and
                  Xuancheng Ren and
                  Yang Fan and
                  Yang Su and
                  Yichang Zhang and
                  Yu Wan and
                  Yuqiong Liu and
                  Zeyu Cui and
                  Zhenru Zhang and
                  Zihan Qiu},
  title        = {Qwen2.5 Technical Report},
  journal      = {CoRR},
  volume       = {abs/2412.15115},
  year         = {2024},
  url          = {https://doi.org/10.48550/arXiv.2412.15115},
  doi          = {10.48550/ARXIV.2412.15115},
  eprinttype    = {arXiv},
  eprint       = {2412.15115},
  timestamp    = {Sat, 15 Nov 2025 16:38:56 +0100},
  biburl       = {https://dblp.org/rec/journals/corr/abs-2412-15115.bib},
  bibsource    = {dblp computer science bibliography, https://dblp.org}
}

@article{DBLP:journals/corr/abs-2505-09388,
  author       = {An Yang and
                  Anfeng Li and
                  Baosong Yang and
                  Beichen Zhang and
                  Binyuan Hui and
                  Bo Zheng and
                  Bowen Yu and
                  Chang Gao and
                  Chengen Huang and
                  Chenxu Lv and
                  Chujie Zheng and
                  Dayiheng Liu and
                  Fan Zhou and
                  Fei Huang and
                  Feng Hu and
                  Hao Ge and
                  Haoran Wei and
                  Huan Lin and
                  Jialong Tang and
                  Jian Yang and
                  Jianhong Tu and
                  Jianwei Zhang and
                  Jian Yang and
                  Jiaxi Yang and
                  Jingren Zhou and
                  Junyang Lin and
                  Kai Dang and
                  Keqin Bao and
                  Kexin Yang and
                  Le Yu and
                  Lianghao Deng and
                  Mei Li and
                  Mingfeng Xue and
                  Mingze Li and
                  Pei Zhang and
                  Peng Wang and
                  Qin Zhu and
                  Rui Men and
                  Ruize Gao and
                  Shixuan Liu and
                  Shuang Luo and
                  Tianhao Li and
                  Tianyi Tang and
                  Wenbiao Yin and
                  Xingzhang Ren and
                  Xinyu Wang and
                  Xinyu Zhang and
                  Xuancheng Ren and
                  Yang Fan and
                  Yang Su and
                  Yichang Zhang and
                  Yinger Zhang and
                  Yu Wan and
                  Yuqiong Liu and
                  Zekun Wang and
                  Zeyu Cui and
                  Zhenru Zhang and
                  Zhipeng Zhou and
                  Zihan Qiu},
  title        = {Qwen3 Technical Report},
  journal      = {CoRR},
  volume       = {abs/2505.09388},
  year         = {2025},
  url          = {https://doi.org/10.48550/arXiv.2505.09388},
  doi          = {10.48550/ARXIV.2505.09388},
  eprinttype    = {arXiv},
  eprint       = {2505.09388},
  timestamp    = {Sat, 15 Nov 2025 16:38:57 +0100},
  biburl       = {https://dblp.org/rec/journals/corr/abs-2505-09388.bib},
  bibsource    = {dblp computer science bibliography, https://dblp.org}
}

@article{DBLP:journals/corr/abs-2501-12948,
  author       = {DeepSeek{-}AI},
  title        = {DeepSeek-R1: Incentivizing Reasoning Capability in LLMs via Reinforcement
                  Learning},
  journal      = {CoRR},
  volume       = {abs/2501.12948},
  year         = {2025},
  url          = {https://doi.org/10.48550/arXiv.2501.12948},
  doi          = {10.48550/ARXIV.2501.12948},
  eprinttype    = {arXiv},
  eprint       = {2501.12948},
  timestamp    = {Tue, 04 Nov 2025 12:55:01 +0100},
  biburl       = {https://dblp.org/rec/journals/corr/abs-2501-12948.bib},
  bibsource    = {dblp computer science bibliography, https://dblp.org}
}

@misc{numina_math_datasets,
  author = {Jia LI and Edward Beeching and Lewis Tunstall and Ben Lipkin and Roman Soletskyi and Shengyi Costa Huang and Kashif Rasul and Longhui Yu and Albert Jiang and Ziju Shen and Zihan Qin and Bin Dong and Li Zhou and Yann Fleureau and Guillaume Lample and Stanislas Polu},
  title = {NuminaMath},
  year = {2024},
  publisher = {Numina},
  journal = {Hugging Face repository},
  howpublished = {\url{[https://huggingface.co/AI-MO/NuminaMath-CoT](https://github.com/project-numina/aimo-progress-prize/blob/main/report/numina_dataset.pdf)}}
}

@inproceedings{DBLP:conf/iclr/LoshchilovH19,
  author       = {Ilya Loshchilov and
                  Frank Hutter},
  title        = {Decoupled Weight Decay Regularization},
  booktitle    = {7th International Conference on Learning Representations, {ICLR} 2019,
                  New Orleans, LA, USA, May 6-9, 2019},
  publisher    = {OpenReview.net},
  year         = {2019},
  url          = {https://openreview.net/forum?id=Bkg6RiCqY7},
  timestamp    = {Thu, 25 Jul 2019 14:26:04 +0200},
  biburl       = {https://dblp.org/rec/conf/iclr/LoshchilovH19.bib},
  bibsource    = {dblp computer science bibliography, https://dblp.org}
}

@inproceedings{DBLP:conf/nips/LewkowyczADDMRS22,
  author       = {Aitor Lewkowycz and
                  Anders Andreassen and
                  David Dohan and
                  Ethan Dyer and
                  Henryk Michalewski and
                  Vinay V. Ramasesh and
                  Ambrose Slone and
                  Cem Anil and
                  Imanol Schlag and
                  Theo Gutman{-}Solo and
                  Yuhuai Wu and
                  Behnam Neyshabur and
                  Guy Gur{-}Ari and
                  Vedant Misra},
  editor       = {Sanmi Koyejo and
                  S. Mohamed and
                  A. Agarwal and
                  Danielle Belgrave and
                  K. Cho and
                  A. Oh},
  title        = {Solving Quantitative Reasoning Problems with Language Models},
  booktitle    = {Advances in Neural Information Processing Systems 35: Annual Conference
                  on Neural Information Processing Systems 2022, NeurIPS 2022, New Orleans,
                  LA, USA, November 28 - December 9, 2022},
  year         = {2022},
  url          = {http://papers.nips.cc/paper\_files/paper/2022/hash/18abbeef8cfe9203fdf9053c9c4fe191-Abstract-Conference.html},
  timestamp    = {Mon, 08 Jan 2024 16:31:35 +0100},
  biburl       = {https://dblp.org/rec/conf/nips/LewkowyczADDMRS22.bib},
  bibsource    = {dblp computer science bibliography, https://dblp.org}
}

@inproceedings{DBLP:conf/nips/HuangWXLZXFYCY024,
  author       = {Zhen Huang and
                  Zengzhi Wang and
                  Shijie Xia and
                  Xuefeng Li and
                  Haoyang Zou and
                  Ruijie Xu and
                  Run{-}Ze Fan and
                  Lyumanshan Ye and
                  Ethan Chern and
                  Yixin Ye and
                  Yikai Zhang and
                  Yuqing Yang and
                  Ting Wu and
                  Binjie Wang and
                  Shichao Sun and
                  Yang Xiao and
                  Yiyuan Li and
                  Fan Zhou and
                  Steffi Chern and
                  Yiwei Qin and
                  Yan Ma and
                  Jiadi Su and
                  Yixiu Liu and
                  Yuxiang Zheng and
                  Shaoting Zhang and
                  Dahua Lin and
                  Yu Qiao and
                  Pengfei Liu},
  editor       = {Amir Globersons and
                  Lester Mackey and
                  Danielle Belgrave and
                  Angela Fan and
                  Ulrich Paquet and
                  Jakub M. Tomczak and
                  Cheng Zhang},
  title        = {OlympicArena: Benchmarking Multi-discipline Cognitive Reasoning for
                  Superintelligent {AI}},
  booktitle    = {Advances in Neural Information Processing Systems 38: Annual Conference
                  on Neural Information Processing Systems 2024, NeurIPS 2024, Vancouver,
                  BC, Canada, December 10 - 15, 2024},
  year         = {2024},
  url          = {http://papers.nips.cc/paper\_files/paper/2024/hash/222d2eaf24cf8259a35d6c7130d31425-Abstract-Datasets\_and\_Benchmarks\_Track.html},
  timestamp    = {Mon, 24 Feb 2025 09:41:51 +0100},
  biburl       = {https://dblp.org/rec/conf/nips/HuangWXLZXFYCY024.bib},
  bibsource    = {dblp computer science bibliography, https://dblp.org}
}

@misc{maa2024aime,
  author = {{Mathematical Association of America}},
  title  = {2024 American Invitational Mathematics Examination (AIME I)},
  year   = {2024},
  note   = {Problems and official solutions available at \url{https://maa.org/math-competitions}},
  url    = {https://maa.org/math-competitions}
}

@misc{maa2023amc,
  author = {{Mathematical Association of America}},
  title  = {2023 American Mathematics Competitions (AMC 10A/10B/12A/12B)},
  year   = {2023},
  note   = {Problems and official solutions available at \url{https://maa.org/math-competitions/amc-1012}},
  url    = {https://maa.org/math-competitions/amc-1012}
}

@inproceedings{DBLP:conf/iclr/HendrycksBBZMSS21,
  author       = {Dan Hendrycks and
                  Collin Burns and
                  Steven Basart and
                  Andy Zou and
                  Mantas Mazeika and
                  Dawn Song and
                  Jacob Steinhardt},
  title        = {Measuring Massive Multitask Language Understanding},
  booktitle    = {9th International Conference on Learning Representations, {ICLR} 2021,
                  Virtual Event, Austria, May 3-7, 2021},
  publisher    = {OpenReview.net},
  year         = {2021},
  url          = {https://openreview.net/forum?id=d7KBjmI3GmQ},
  timestamp    = {Wed, 23 Jun 2021 17:36:39 +0200},
  biburl       = {https://dblp.org/rec/conf/iclr/HendrycksBBZMSS21.bib},
  bibsource    = {dblp computer science bibliography, https://dblp.org}
}

@article{DBLP:journals/corr/abs-2504-02546,
  author       = {Xiangxiang Chu and
                  Hailang Huang and
                  Xiao Zhang and
                  Fei Wei and
                  Yong Wang},
  title        = {{GPG:} {A} Simple and Strong Reinforcement Learning Baseline for Model
                  Reasoning},
  journal      = {CoRR},
  volume       = {abs/2504.02546},
  year         = {2025},
  url          = {https://doi.org/10.48550/arXiv.2504.02546},
  doi          = {10.48550/ARXIV.2504.02546},
  eprinttype    = {arXiv},
  eprint       = {2504.02546},
  timestamp    = {Mon, 19 May 2025 13:58:38 +0200},
  biburl       = {https://dblp.org/rec/journals/corr/abs-2504-02546.bib},
  bibsource    = {dblp computer science bibliography, https://dblp.org}
}

@article{meng2024simpo,
  title={Simpo: Simple preference optimization with a reference-free reward},
  author={Meng, Yu and Xia, Mengzhou and Chen, Danqi},
  journal={Advances in Neural Information Processing Systems},
  volume={37},
  pages={124198--124235},
  year={2024}
}

@article{ethayarajh2024kto,
  title={Kto: Model alignment as prospect theoretic optimization},
  author={Ethayarajh, Kawin and Xu, Winnie and Muennighoff, Niklas and Jurafsky, Dan and Kiela, Douwe},
  journal={arXiv preprint arXiv:2402.01306},
  year={2024}
}

@article{grattafiori2024llama,
  title={The llama 3 herd of models},
  author={Grattafiori, Aaron and Dubey, Abhimanyu and Jauhri, Abhinav and Pandey, Abhinav and Kadian, Abhishek and Al-Dahle, Ahmad and Letman, Aiesha and Mathur, Akhil and Schelten, Alan and Vaughan, Alex and others},
  journal={arXiv preprint arXiv:2407.21783},
  year={2024}
}

@inproceedings{cuiultrafeedback,
  title={ULTRAFEEDBACK: Boosting Language Models with Scaled AI Feedback},
  author={Cui, Ganqu and Yuan, Lifan and Ding, Ning and Yao, Guanming and He, Bingxiang and Zhu, Wei and Ni, Yuan and Xie, Guotong and Xie, Ruobing and Lin, Yankai and others},
  booktitle={Forty-first International Conference on Machine Learning},
  year={2024}
}

@article{zhou2023instruction,
  title={Instruction-following evaluation for large language models},
  author={Zhou, Jeffrey and Lu, Tianjian and Mishra, Swaroop and Brahma, Siddhartha and Basu, Sujoy and Luan, Yi and Zhou, Denny and Hou, Le},
  journal={arXiv preprint arXiv:2311.07911},
  year={2023}
}

@inproceedings{lin2022truthfulqa,
  title={Truthfulqa: Measuring how models mimic human falsehoods},
  author={Lin, Stephanie and Hilton, Jacob and Evans, Owain},
  booktitle={Proceedings of the 60th annual meeting of the association for computational linguistics (volume 1: long papers)},
  pages={3214--3252},
  year={2022}
}

@inproceedings{zellers2019hellaswag,
  title={Hellaswag: Can a machine really finish your sentence?},
  author={Zellers, Rowan and Holtzman, Ari and Bisk, Yonatan and Farhadi, Ali and Choi, Yejin},
  booktitle={Proceedings of the 57th annual meeting of the association for computational linguistics},
  pages={4791--4800},
  year={2019}
}

% \appendix

% \section{Appendix}
% \label{sec:appendix}

% This is an appendix.

\begin{table*}[!t]
\centering
\resizebox{\textwidth}{!}{
\begin{tabular}{llcccccccc}
\toprule
\textbf{Model} & \textbf{Num. $K$} & \textbf{GSM8K} & \textbf{MATH} & \textbf{Minerva} & \textbf{Olympiad} & \textbf{AIME24} & \textbf{AMC23} & \textbf{College} & \textbf{Avg.} \\
\midrule
% --- Qwen-2.5-Math-1.5B (5行数据) ---
\multirow{6}{*}{\textbf{\shortstack{Qwen-2.5-Math-1.5B \\ (Mean@3)}}}
& Base & 41.6 & 35.2 & 9.3 & 22.4 & 10.0 & 32.5 & 8.3 & 22.8 \\
\cdashline{2-10}
& 2 & 57.7 & 46.4 & 11.3 & 25.1 & \textbf{11.1} & 39.2 & 14.2 & 29.3 \\
& 4 & 62.0 & 50.2 & 13.5 & 27.1 & 7.8 & 41.7 & 18.8 & 31.6 \\
& 8 & \textbf{73.2} & \textbf{59.6} & 18.5 & \textbf{28.8} & \textbf{11.1} & \textbf{43.3} & 27.6 & \textbf{37.5} \\
& 16 & 71.9 & 59.2 & \textbf{19.4} & 28.5 & 5.6 & 39.2 & \textbf{28.1} & 36.0 \\
\midrule
% --- Qwen-2.5-Math-1.5B (5行数据) ---
\multirow{6}{*}{\textbf{\shortstack{Qwen-2.5-Math-1.5B \\ (Pass@3)}}}
& Base & 69.0 & 58.1 & 13.5 & 37.2 & 20.0 & 52.5 & 17.8 & 38.3 \\
\cdashline{2-10}
& 2 & 81.3 & 69.4 & 23.5 & 37.6 & \textbf{26.7} & 60.0 & 26.2 & 46.4 \\
& 4 & 83.2 & 71.4 & 27.6 & 41.5 & 20.0 & 62.5 & 31.1 & 48.2 \\
& 8 & \textbf{88.6} & \textbf{77.3} & \textbf{33.5} & \textbf{41.9} & 20.0 & \textbf{70.0} & \textbf{38.6} & \textbf{52.8} \\
& 16 & \textbf{88.6} & 76.6 & 32.0 & 41.2 & 13.3 & 57.5 & 38.3 & 49.8 \\
\bottomrule
\end{tabular}
}
\caption{Ablation study on the number of self-generated responses per problem $K$ in RIFT training for Qwen-2.5-Math-1.5B.
For each $K \in \{ 2,4,8,16 \}$, we sample $K$ responses per problem from the base model.
We report Mean@3 and Pass@3 accuracy (\%) across 7 mathematical benchmarks.}
\label{tab:rollout_ablation}
\end{table*}

\newpage

\appendix
\section{Extended Analysis and Empirical Explorations}
\label{appendix:extended_analysis}

% \subsection{Extended Analysis and Empirical Explorations}
% \label{appendix:extended_analysis}

In this section, we provide additional empirical evidence and in-depth analyses to further elucidate the behavior of RIFT. Specifically, we investigate how the scale of self-generated data influences the alignment performance and examine the underlying dynamics of policy convergence facilitated by our framework.

\subsection{Ablation Study on the Number of Self-Generated Responses}
\label{sec:addition_exp}

To evaluate how the quantity of self-generated data affects the alignment performance of RIFT, we conduct an ablation study by varying the number of sampled responses per problem ($K$). While the main experiments utilize $K=8$, we investigate the performance across $K \in \{2, 4, 8, 16\}$, using Qwen-2.5-Math-1.5B trained on MATH. All other hyperparameters remain consistent with the main training setup.

As shown in Table~\ref{tab:rollout_ablation}, increasing $K$ from 2 to 8 consistently improves both Mean@3 (+8.2 points) and Pass@3 (+6.4 points), with $K=8$ achieving the highest scores across nearly all benchmarks. However, further increasing $K$ to 16 leads to a noticeable drop particularly in Pass@3 ($-$3.0 points), suggesting that more samples do not always translate to better alignment.

The performance trend is intriguing because the underlying data statistics, reported in Tables~\ref{tab:rollout_data_stats} and \ref{tab:rollout_data_stats_mixed}, show little variation across $K$: the proportion of positive responses remains stable at around 66.4\%, and the fraction of problems with mixed reward signals plateaus near 85\% for $K \geq 4$. In other words, simply generating more responses does not significantly alter the overall composition of the training data.

The resolution lies in the quality of exploration, not just its quantity. While the aggregate statistics appear similar, larger $K$ enables richer coverage of the solution space to capture a wider variety of reasoning patterns, including subtle failure modes. At $K=8$, this diversity is sufficient for RIFT to learn robust distinctions between correct and flawed reasoning, without overwhelming the model with redundant or low-signal trajectories. By contrast, $K=16$ introduces diminishing returns: the marginal gain in mixed-reward problems (from 84.7\% to 92.3\%) comes at the cost of increased noise, which disproportionately harms the reliability of top predictions, as reflected in the sharper decline of Pass@3.

\begin{table}[htbp]
\centering
\resizebox{\columnwidth}{!}{
\renewcommand{\arraystretch}{0.95}
\begin{tabular}{lccccc}
\toprule
$K$ &  \# Num. & \# Total & \parbox{1.2cm}{\centering \# Pos. \\ (r$>$0)} & \parbox{1.4cm}{\centering \# Neg. \\ (r$<$0)} & \% Pos.  \\
\midrule
2 & 3,000 & 6,000 & 3,992 & 2,008 & 66.5\%  \\
4 & 3,000 & 12,000 & 7,964 & 4,036 & 66.4\%  \\
8 & 3,000 & 24,000 & 15,941 & 8,059 & 66.4\%  \\
16 & 3,000 & 48,000 & 31,943 & 16,057 & 66.5\%  \\
\bottomrule
\end{tabular}
} 
\caption{Statistics of self-generated rollouts for 3,000 problems from the MATH dataset, sampled using the Qwen-2.5-Math-1.5B base model with $K\in \{2,4,8,16 \}$ responses per problem.}
\vspace{-2mm}
\label{tab:rollout_data_stats}
\end{table}

% \begin{figure}
%     \centering
%     \includegraphics[width=1.0\linewidth]{figs/rollout_acc.png}
%     \caption{Caption}
%     \label{fig:rollout_acc}
% \end{figure}

As shown in Table \ref{tab:rollout_data_stats}, the overall proportion of positive samples (\% Pos.) remains remarkably consistent at approximately $66.5\%$ as $K$ increases from 2 to 16. 
More importantly, Table \ref{tab:rollout_data_stats_mixed} shows that the percentage of problems with mixed-reward responses (containing both positive and negative outcomes) scales with $K$, rising from 78.5\% to 92.3\%. This trend demonstrates that a larger $K$ effectively augments intra-problem diversity.

\begin{table}[htbp]
\centering
\small
\resizebox{\columnwidth}{!}{
\renewcommand{\arraystretch}{0.95}
\begin{tabular}{lccc}
\toprule
$K$ & \# Num. & \# Mixed-Reward Num. & \% Mixed \\
\midrule
2 & 3,000 & 2,356 & 78.5\% \\
4 & 3,000 & 2,548 & 84.9\% \\
8 & 3,000 & 2,541 & 84.7\% \\
16 & 3,000 & 2,768 & 92.3\% \\
\bottomrule
\end{tabular}
} 
\caption{Proportion of problems containing both positive- and negative-reward responses in their $K$ self-generated responses.}
\vspace{-3mm}
\label{tab:rollout_data_stats_mixed}
\end{table}

The performance drop when increasing from $K=8$ to $K=16$ may stem from overfitting to noisy negative patterns in the limited MATH dataset. With more self-generated samples, the model encounters repeated or low-quality negatives, potentially leading to spurious correlations. This is consistent with observations in rejection sampling, where excessive sampling can degrade performance.

\begin{table*}[!htbp]
\centering
\small
\resizebox{\textwidth}{!}{
\begin{tabular}{llcccccccc}
\toprule
\textbf{Model} & \textbf{Method} & \textbf{GSM8K} & \textbf{MATH} & \textbf{Minerva} & \textbf{Olympiad} & \textbf{AIME24} & \textbf{AMC23} & \textbf{College} & \textbf{Avg.} \\
\midrule
\multirow{6}{*}{\textbf{\shortstack{Qwen-2.5-Math-1.5B \\ (Mean@3)}}}
& Base & 41.6 & 35.2 & 9.3 & 22.4 & 10.0 & 32.5 & 8.3 & 22.8 \\
\cdashline{2-10}
& SFT & 56.3 & 43.3 & 11.6 & 16.5 & \textbf{8.9} & 32.5 & 20.3 & 27.1 \\
& \cellcolor[HTML]{E6E6FA}\textbf{+ RIFT} & \cellcolor[HTML]{E6E6FA}\textbf{74.7} & \cellcolor[HTML]{E6E6FA}\textbf{62.9} & \cellcolor[HTML]{E6E6FA}\textbf{18.5} & \cellcolor[HTML]{E6E6FA}\textbf{29.4} & \cellcolor[HTML]{E6E6FA}\textbf{8.9} & \cellcolor[HTML]{E6E6FA}\textbf{39.2} & \cellcolor[HTML]{E6E6FA}\textbf{36.8} & \cellcolor[HTML]{E6E6FA}\textbf{\underline{38.6}} \textcolor{red!50!black}{\textbf{(+11.5)}} \\
& DFT & \textbf{77.1} & 54.4 & 16.3 & 19.1 & 2.2 & 25.8 & \textbf{35.9} & 33.0 \\
& \cellcolor[HTML]{E6E6FA}\textbf{+ RIFT} & \cellcolor[HTML]{E6E6FA}74.7 & \cellcolor[HTML]{E6E6FA}\textbf{61.8} & \cellcolor[HTML]{E6E6FA}\textbf{18.3} & \cellcolor[HTML]{E6E6FA}\textbf{30.4} & \cellcolor[HTML]{E6E6FA}\textbf{4.4} & \cellcolor[HTML]{E6E6FA}\textbf{45.0} & \cellcolor[HTML]{E6E6FA}35.5 & \cellcolor[HTML]{E6E6FA}\textbf{\underline{38.6}} \textcolor{red!50!black}{\textbf{(+5.6)}} \\
& RIFT & 73.2 & 59.6 & \textbf{18.5} & 28.8 & \textbf{11.1} & \textbf{43.3} & 27.6 & \textbf{37.5} \\
& \cellcolor[HTML]{E6E6FA}\textbf{+ RIFT} & \cellcolor[HTML]{E6E6FA}\textbf{74.8} & \cellcolor[HTML]{E6E6FA}\textbf{62.3} & \cellcolor[HTML]{E6E6FA}17.5 & 
\cellcolor[HTML]{E6E6FA}\textbf{29.8} & \cellcolor[HTML]{E6E6FA}10.0 & 
\cellcolor[HTML]{E6E6FA}42.5 & 
\cellcolor[HTML]{E6E6FA}\textbf{34.1} & \cellcolor[HTML]{E6E6FA}\textbf{\underline{38.7}} \textcolor{red!50!black}{\textbf{(+1.2)}} \\
\midrule
\multirow{7}{*}{\textbf{\shortstack{Qwen-2.5-Math-1.5B \\ (Pass@3)}}}
& Base & 69.0 & 58.1 & 13.5 & 37.2 & 20.0 & 52.5 & 17.8 & 38.3 \\
\cdashline{2-10}
& SFT & 84.2 & 67.1 & 23.2 & 31.6 & \textbf{16.7} & \textbf{60.0} & 36.6 & 45.6 \\
& \cellcolor[HTML]{E6E6FA}\textbf{+ RIFT} & \cellcolor[HTML]{E6E6FA}\textbf{90.0} & \cellcolor[HTML]{E6E6FA}\textbf{78.7} & \cellcolor[HTML]{E6E6FA}\textbf{34.6} & \cellcolor[HTML]{E6E6FA}\textbf{42.4} & \cellcolor[HTML]{E6E6FA}13.3 & 
\cellcolor[HTML]{E6E6FA}\textbf{60.0} & \cellcolor[HTML]{E6E6FA}\textbf{47.4} & \cellcolor[HTML]{E6E6FA}\textbf{\underline{52.3}} \textcolor{red!50!black}{\textbf{(+6.7)}} \\
& DFT & \textbf{89.9} & 68.4 & 29.0 & 31.1 & 6.7 & 40.0 & 45.5 & 44.4 \\
& \cellcolor[HTML]{E6E6FA}\textbf{+ RIFT} & \cellcolor[HTML]{E6E6FA}87.9 & \cellcolor[HTML]{E6E6FA}\textbf{78.5} & \cellcolor[HTML]{E6E6FA}\textbf{29.4} & \cellcolor[HTML]{E6E6FA}\textbf{44.3} & \cellcolor[HTML]{E6E6FA}\textbf{13.3} & \cellcolor[HTML]{E6E6FA}\textbf{70.0} & \cellcolor[HTML]{E6E6FA}\textbf{46.7} & \cellcolor[HTML]{E6E6FA}\textbf{\underline{52.9}} \textcolor{red!50!black}{\textbf{(+8.5)}} \\
& RIFT & 88.6 & 77.3 & \textbf{33.5} & 41.9 & \textbf{20.0} & \textbf{70.0} & 38.6 & \textbf{\underline{52.8}} \\
& \cellcolor[HTML]{E6E6FA}\textbf{+ RIFT} & \cellcolor[HTML]{E6E6FA}\textbf{88.8} & \cellcolor[HTML]{E6E6FA}\textbf{79.3} & \cellcolor[HTML]{E6E6FA}28.7 & 
\cellcolor[HTML]{E6E6FA}\textbf{44.0} & \cellcolor[HTML]{E6E6FA}16.7 & 
\cellcolor[HTML]{E6E6FA}65.0 & \cellcolor[HTML]{E6E6FA}\textbf{45.8} & \cellcolor[HTML]{E6E6FA}{52.6} \textcolor{green!50!black}{\textbf{(-0.2)}} \\
\bottomrule
\end{tabular}
}
\caption{Mean@3 and Pass@3 accuracy (\%) on 7 mathematical benchmarks for Qwen-2.5-Math-1.5B under different sequential training protocols,
starting from models trained via SFT, DFT, or RIFT, we generate self-sampled responses and apply a second round of RIFT (denoted ``+ RIFT'').
Best results are in \textbf{bold}. \textcolor{red!50!black}{\textbf{(+)}} indicates the absolute improvement over the respective single-phase baseline.
}
\label{tab:plug_and_play}
\end{table*}

\subsection{Exploration Study: RIFT Drives Policy Convergence}

To further evaluate whether RIFT serves as a modular enhancement or constitutes an indispensable component in the traiing pipeline, we conduct a comparison study using Qwen-2.5-Math-1.5B trained on MATH, under three distinct protocols: (i) SFT followed by RIFT, (ii) DFT followed by RIFT, and (iii) Iterative RIFT (RIFT followed by RIFT). 
In each setting, the previously trained SFT, DFT, or  RIFT models serves as the base policy to generate a new corpus of self-generated responses. These responses are then utilized to perform a subsequent round of RIFT training.
The central hypothesis is that if RIFT functions as a plug-and-play refiner, performance should vary with initialization quality; conversely, if RIFT itself drives capability gains, final performance should converge across initialization strategies.

\begin{table}[htbp]
\centering
\resizebox{\columnwidth}{!}{
\renewcommand{\arraystretch}{0.95}
\begin{tabular}{lccccc}
\toprule
$K$ &  \# Num. & \# Total & \parbox{1.2cm}{\centering \# Pos. \\ (r$>$0)} & \parbox{1.4cm}{\centering \# Neg. \\ (r$<$0)} & \% Pos.  \\
\midrule
SFT & 3,000 & 24,000 & 8,300 & 15,700 & 65.4\%  \\
DFT & 3,000 & 24,000 & 7,964 & 16,859 & 70.2\%  \\
RIFT & 3,000 & 24,000 & 15,941 & 16,287 & 67.9\%  \\
\bottomrule
\end{tabular}
} 
\caption{Statistics of self-generated responses across previously trained SFT, DFT and RIFT models.}
\vspace{-2mm}
\label{tab:reapply_rollout_data_stats}
\end{table}

\begin{table}[htbp]
\centering
\small
\resizebox{\columnwidth}{!}{
\renewcommand{\arraystretch}{0.95}
\begin{tabular}{lccc}
\toprule
$K$ & \# Num. & \# Mixed-Reward Num. & \% Mixed \\
\midrule
SFT & 3,000 & 2,686 & 89.5\% \\
DFT & 3,000 & 2,697 & 89.9\% \\
RIFT & 3,000 & 2,669 & 89.0\% \\
\bottomrule
\end{tabular}
} 
\caption{Proportion of problems containing both positive- and negative-reward responses in self-generated responses across previously trained SFT, DFT and RIFT models.}
\vspace{-3mm}
\label{tab:reapply_data_stats_mixed}
\end{table}

As shown in Table~\ref{tab:plug_and_play}, the results support the hypothesis that RIFT itself drives capability gains.
Across all previously trained models, applying a second round of RIFT yields consistent improvements in Mean@3, outperforming their single-round counterparts: notable gains of +11.5 (SFT), +5.6 (DFT), and +1.2 (RIFT), even when starting from a strong RIFT-initialized policy. 
Gains diminish as the base policy strengthens, indicating convergence toward a shared high-reward policy.
However, the re-application of RIFT primarily boosts Mean@3 rather than Pass@3, which shows little to no further improvement compared to single-round trained RIFT. 
Together, these findings demonstrate that RIFT acts not as a plug-and-play refiner whose efficacy depends on initialization, but as a self-convergent alignment phase that capable of steering diverse initial policies toward comparable final performance.
Tables \ref{tab:reapply_rollout_data_stats} and \ref{tab:reapply_data_stats_mixed} provide a detailed statistical overview of the self-generated responses produced by the previously trained SFT, DFT, and RIFT models.

\section{Discussion on Loss Function and Gradient Stability}
\label{appendix:loss_discussion}

In this section, we provide a detailed analysis of our choice of using a Taylor approximation for the loss function, particularly in the context of penalizing negative samples.

\subsection{Design Intuition: Stability over Global Precision}
The primary motivation for employing a Taylor-based linear penalty (i.e., $(1-p)$) rather than the standard negative log-likelihood ($-\log(1-p)$) for negative samples lies in the \textit{bounded and stable gradients} it provides across the entire probability range $[0, 1]$. While the Taylor approximation is locally accurate near $p \to 1$, its global behavior is empirically more conducive to stable optimization in model alignment.

For negative trajectories, our design intuition follows a specific priority:
\begin{itemize}
    \item \textbf{Confident Errors:} Tokens assigned high probability ($p \to 1$) on negative samples indicate cases where the model is ``confidently wrong.'' These require strong, linear penalties to steer the model away from incorrect reasoning paths.
    \item \textbf{Correct Uncertainty:} Tokens with low probability ($p \to 0$) on negative examples reflect appropriate uncertainty. In such cases, the linear approximation provides a gentler correction signal, preventing the model from over-correcting on already unlikely tokens.
\end{itemize}

\subsection{Comparison with Alternative Formulations}
We evaluated several alternative functions to penalize negative samples, but found them less suitable for the following reasons:
\begin{itemize}
    \item \textbf{Logarithmic Loss ($-\log(1-p)$):} This formulation diverges as $p \to 1$. In practical training, this leads to gradient instability and a significant magnitude imbalance between positive and negative loss components, often resulting in training divergence.
    \item \textbf{Sigmoid-based Variants:} While sigmoid transformations can bound the loss, they introduce unnecessary saturation regions (where gradients vanish) without offering clear performance benefits in our preliminary experiments.
\end{itemize}

In summary, the choice of the Taylor approximation in RIFT is empirically motivated by its ability to maintain robust gradient signals and prevent optimization instability. We acknowledge that exploring other functional forms with similar bounded properties remains a promising direction for future work.

% \onecolumn
\subsection{Proofs and Theoretical Analysis in Section \ref{sec:methodology}}
\label{sec:proof}

\begin{proof}[Proof of Theorem \ref{thm:collapse}]
Let the dataset $\mathcal{D}$ be finite. The naive signed-weighted loss function can be explicitly written as:
\begin{equation}
\mathcal{L}_{\text{naive}}(\theta) = - \frac{1}{|\mathcal{D}|} \sum_{(x, y) \in \mathcal{D}} r(x, y) \log \pi_\theta(y \mid x).
\end{equation}
By the theorem's premise, the subset of negative samples $\mathcal{D}^- = \{(x, y) \in \mathcal{D} \mid r(x, y) < 0\}$ is non-empty. Let $(x_0, y_0) \in \mathcal{D}^-$ be a specific negative sample with weight $r_0 := r(x_0, y_0) < 0$.

We invoke the assumption of \textit{sufficient expressivity}, which implies that the model parameterization $\theta$ allows for the arbitrary manipulation of the probability mass $\pi_\theta(\cdot \mid x_0)$ on the support $\mathcal{Y}$. Specifically, we can construct a sequence of parameters $\{\theta_n\}_{n=1}^\infty$ such that the probability of the negative sample decays to zero:
\begin{equation}
\pi_{\theta_n}(y_0 \mid x_0) = \epsilon_n, \text{where } \epsilon_n > 0 \text{ and } \lim_{n \to \infty} \epsilon_n = 0.
\end{equation}
To ensure the well-posedness of the remaining terms, we stipulate that the probability mass removed from $y_0$ is redistributed to other tokens $y' \in \mathcal{Y} \setminus \{y_0\}$ uniformly, such that for all other samples $(x, y) \in \mathcal{D} \setminus \{(x_0, y_0)\}$, the probabilities satisfy $\pi_{\theta_n}(y \mid x) \geq \delta$ for some constant $\delta > 0$. This ensures that $\log \pi_{\theta_n}(y \mid x)$ remains bounded from below.

Then, we have 
\begin{equation}
\lim_{n \to \infty} \left| \frac{\partial \mathcal{L}_{\text{naive}}}{\partial \pi_{\theta_n}} \right| = \left| \frac{r}{\pi_{\theta_n} (y|x)} \right| = \infty.
\end{equation}

Now, we decompose the loss function for the sequence $\theta_n$:
\begin{equation}
\begin{aligned}
& \mathcal{L}_{\text{naive}}(\theta_n)  = - \frac{1}{|\mathcal{D}|}  \underbrace{r_0 \log \epsilon_n}_{T_1} \\ 
&\quad - \frac{1}{|\mathcal{D}|} \underbrace{\sum_{(x, y) \in \mathcal{D} \setminus \{(x_0, y_0)\}} r(x, y) \log \pi_{\theta_n}(y \mid x)}_{T_2} .
\end{aligned}
\end{equation}

We analyze the asymptotic behavior of the two terms $T_1$ and $T_2$ as $n \to \infty$:

\begin{enumerate}
\item \textbf{The Negative Sample Term ($T_1$):} 
Since $r_0 < 0$ and $\lim_{n \to \infty} \log \epsilon_n = -\infty$, the product behaves as:
\begin{equation}
\begin{aligned}
\lim_{n \to \infty} T_1 & = \lim_{n \to \infty} r_0 \log \epsilon_n \\ 
& = (-|r_0|) \cdot (-\infty) = +\infty.    
\end{aligned}
\end{equation}

\item \textbf{The Remaining Terms ($T_2$):} 
The sum $T_2$ consists of a finite number of terms.
\begin{itemize}
\item For any sample with $r(x, y) \geq 0$, since $\pi_{\theta_n}(y|x) \le 1$, we have $r(x, y) \log \pi_{\theta_n}(y|x) \le 0$. Thus, these terms are bounded from above by 0.
\item For any sample with $r(x, y) < 0$, since we enforced $\pi_{\theta_n}(y|x) \geq \delta$, the term $r(x, y) \log \pi_{\theta_n}(y|x)$ is finite.
\end{itemize}
Crucially, since we ensure other probabilities do not vanish (i.e., $\pi \ge \delta$), the logarithm $\log \pi$ is bounded below by $\log \delta$. Consequently, the entire sum $T_2$ is bounded, i.e., there exists a constant $M$ such that $|T_2| < M$.
\end{enumerate}

Combining these results, the limit of the total loss is:
\begin{equation}
\begin{aligned}
\lim_{n \to \infty} \mathcal{L}_{\text{naive}}(\theta_n) &= - \frac{1}{|\mathcal{D}|} \left( \lim_{n \to \infty} T_1 + \lim_{n \to \infty} T_2 \right) \\
&= - \frac{1}{|\mathcal{D}|} ( +\infty + O(1) ) \\
&= -\infty.
\end{aligned}
\end{equation}

Thus, we have identified a sequence in the parameter space along which the loss diverges to negative infinity. This proves that $\mathcal{L}_{\text{naive}}$ is unbounded from below.
\end{proof}

First, we give a formal reformulation of Theorem \ref{thm:RIFT_properties} as follows. 

\begin{theorem}[Formal Statement of Theorem \ref{thm:RIFT_properties}]

The RIFT formulation satisfies the following theoretical properties:
\begin{enumerate}
\item[(\romannumeral1)] \textbf{Boundedness:} Assume that the reward $r$ is bounded with constant $|r| \leq M$ for all the data. Then, the loss objective is bounded from below.
\item[(\romannumeral2)] \textbf{Reward Lower-Bound Maximization:} Let the expected reward objective be $\mathcal{J}(\theta) := \mathbb{E}_{y \sim \pi_\theta} [r(x, y)]$. Assume that all the data sampled from the reference distribution $\pi_{ref}$ has a lower bound probability, i.e., $\pi_{ref} (y | x) \geq C_2$. Then, there exists a constant $C_1 > 0$, such that $\mathcal{J} (\theta) \geq - \frac{1}{C_2} \mathcal{L}_{\text{RIFT}}(\theta) + C_2$. Thus, minimizing the RIFT loss objective is equivalent to maximize the reward objective.
\end{enumerate}
    
\end{theorem}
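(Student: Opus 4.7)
The plan is to handle the two claims separately, using elementary inequalities on the logarithm and a single importance-sampling rewrite to bridge $\mathcal{J}(\theta)$ and $\mathcal{L}_{\text{RIFT}}(\theta)$.

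For part (i), I would split $\mathcal{L}_{\text{RIFT}}$ into its two summands and show that each is non-negative. The positive-sample term $-r(x,y)\log\pi_\theta(y|x)$ is a product of a positive reward and a non-positive log-probability (since $\pi_\theta \in (0,1]$), hence non-negative. For the negative-sample term, I would note that $r(x,y)<0$ and $\pi_\theta(y|x)\in[0,1]$, so $-r(x,y)\pi_\theta(y|x)\in[0,|r(x,y)|]\subseteq[0,M]$ by the boundedness assumption. Together this yields $\mathcal{L}_{\text{RIFT}}(\theta)\geq 0$, which is the desired (in fact, sharp) lower bound and already exhibits the contrast with Theorem~\ref{thm:collapse}.

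For part (ii), the key step is to rewrite $\mathcal{J}(\theta)=\mathbb{E}_{y\sim\pi_{\text{ref}}}\bigl[\tfrac{\pi_\theta(y|x)}{\pi_{\text{ref}}(y|x)}\,r(x,y)\bigr]$ by importance sampling (valid once we note $\pi_{\text{ref}}(y|x)\geq C_2>0$ guarantees absolute continuity), and then split the expectation over $\mathcal{D}^+$ and $\mathcal{D}^-$. On $\mathcal{D}^-$ the reward is negative and the importance ratio is at most $1/C_2$, so I would use $\tfrac{r}{\pi_{\text{ref}}}\geq \tfrac{r}{C_2}$ (direction flipped because $r<0$) to get $\mathbb{E}_{\mathcal{D}^-}\!\bigl[\tfrac{\pi_\theta r}{\pi_{\text{ref}}}\bigr]\geq \tfrac{1}{C_2}\mathbb{E}_{\mathcal{D}^-}[\pi_\theta r]$. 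On $\mathcal{D}^+$, I would invoke the same first-order inequality $\log u\leq u-1$ that motivated the surrogate in the first place, rearranged as $\pi_\theta\geq 1+\log\pi_\theta$, multiply by $r>0$ and divide by $\pi_{\text{ref}}$, and then again use $\pi_{\text{ref}}\geq C_2$ on the $\log$ term and $\pi_{\text{ref}}\leq 1$ on the constant term to obtain $\mathbb{E}_{\mathcal{D}^+}\!\bigl[\tfrac{\pi_\theta r}{\pi_{\text{ref}}}\bigr]\geq \tfrac{1}{C_2}\mathbb{E}_{\mathcal{D}^+}[r\log\pi_\theta]+\mathbb{E}_{\mathcal{D}^+}[r]$. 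Summing the two bounds gives exactly $\mathcal{J}(\theta)\geq -\tfrac{1}{C_2}\mathcal{L}_{\text{RIFT}}(\theta)+C_1$ with $C_1:=\mathbb{E}_{\mathcal{D}^+}[r]>0$, so minimizing $\mathcal{L}_{\text{RIFT}}$ maximizes a lower bound on $\mathcal{J}$.

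The main obstacle I anticipate is bookkeeping with signs: the $\log u\leq u-1$ inequality goes in opposite useful directions depending on whether it is multiplied by a positive or a negative weight, and the $1/\pi_{\text{ref}}$ factor must be bounded in the direction compatible with the sign of $r$. Taking the $\log$-surrogate on $\mathcal{D}^+$ and the linear surrogate on $\mathcal{D}^-$, which is exactly the asymmetric choice in Definition~\ref{def:RIFT_loss}, is what makes both halves point the right way; outside this pairing the bound would either reverse or degrade to a vacuous one. A secondary subtlety is that the constant on the right-hand side is dataset-dependent through $\mathbb{E}_{\mathcal{D}^+}[r]$ but is independent of $\theta$, so it is harmless for the optimization interpretation; I would flag this explicitly in the write-up to reconcile the notation in the theorem statement.
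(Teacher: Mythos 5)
Your proposal is correct, and for part~(i) it is essentially the paper's argument (split by sign and bound each term), except that you use the sign convention of Definition~\ref{def:RIFT_loss} consistently and thereby get the sharper bound $\mathcal{L}_{\text{RIFT}}(\theta)\geq 0$ without ever invoking $|r|\leq M$; the paper's proof flips the sign on the negative-sample term when restating the loss, and as a result only concludes $\mathcal{L}_{\text{RIFT}}\geq -M$ via the reward bound. Your version makes the boundedness assumption superfluous for~(i).

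For part~(ii) your route is a genuine (if small) variation on the paper's. Both proofs begin with the importance-sampling identity $\mathcal{J}(\theta)=\mathbb{E}_{\pi_{\text{ref}}}[\rho\, r]$ and handle $\mathcal{D}^-$ identically via $1/\pi_{\text{ref}}\leq 1/C_2$. The difference is on $\mathcal{D}^+$: the paper applies the inequality $u\geq 1+\log u$ to $u=\rho$, which yields $\mathbb{E}_{\mathcal{D}^+}[\rho r]\geq \mathbb{E}_{\mathcal{D}^+}[r\log\pi_\theta]+\mathbb{E}_{\mathcal{D}^+}[r(1-\log\pi_{\text{ref}})]$, i.e.\ a unit coefficient (not $1/C_2$) on $\mathbb{E}_{\mathcal{D}^+}[r\log\pi_\theta]$, so the synthesis into $-\frac{1}{C_2}\mathcal{L}_{\text{RIFT}}$ needs the extra (unstated) observation that $\mathbb{E}_{\mathcal{D}^+}[r\log\pi_\theta]\geq \frac{1}{C_2}\mathbb{E}_{\mathcal{D}^+}[r\log\pi_\theta]$ because the term is nonpositive and $C_2\leq 1$. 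You instead apply $u\geq 1+\log u$ to $u=\pi_\theta$ and then bound $1/\pi_{\text{ref}}$ in whichever direction the sign of the factor requires ($\geq 1$ on the constant, $\leq 1/C_2$ on the log); this delivers the $1/C_2$ factor on both loss terms at once and hence the advertised $-\frac{1}{C_2}\mathcal{L}_{\text{RIFT}}$ form directly, with a cleaner constant $C_1=\mathbb{E}_{\mathcal{D}^+}[r]>0$. Your write-up is therefore tighter and avoids the paper's slightly garbled final display (where $C_1$ and $C_2$ appear swapped). The only thing to flag explicitly, which you already noted, is that the additive constant is $\theta$-independent (hence irrelevant to the $\arg\min$), and, as in the paper, the per-subset expectations are treated as fixed partial sums so no mixing weights enter.
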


\begin{proof}[Proof of Theorem \ref{thm:RIFT_properties}]
\textbf{(\romannumeral1) Boundedness:}
Recall the formulation of the RIFT loss:
\begin{equation}
\begin{aligned}
\mathcal{L}_{\text{RIFT}}(\theta) = & - \mathbb{E}_{\mathcal{D}^+}  \left[ r(x, y) \log \pi_\theta (y \mid x) \right] \\
& + \mathbb{E}_{\mathcal{D}^-} \left[ r(x, y) \pi_\theta (y \mid x) \right].    
\end{aligned}
\end{equation}
We analyze the boundedness of the two terms separately. 

For the positive sampled term, we have 
\begin{equation}
- \mathbb{E}_{\mathcal{D}^+} \left[ r(x, y) \log \pi_\theta (y \mid x) \right] \geq 0,
\end{equation}
since $\log \pi_\theta \leq 0$ for the probability distribution.

For the negative sampled term, the absolute value is bounded by
\begin{equation}
\begin{aligned}
& |- \mathbb{E}_{\mathcal{D}^-} \left[ r(x, y) \pi_\theta (y \mid x) \right] | \\
\leq & \mathbb{E}_{\mathcal{D}^-} \left[ | r(x, y) \pi_\theta (y \mid x) | \right] \\ 
\leq & \mathbb{E}_{\mathcal{D}^-} \left[ | r(x, y) | \right] \leq M.    
\end{aligned}
\end{equation}
Hence, the second term is bounded from below.

Thus, $\mathcal{L}_{\text{RIFT}}$ is bounded from below.

\textbf{(\romannumeral2) Reward Lower-Bound Maximization:}
We aim to show that maximizing the negative RIFT loss (i.e., minimizing $\mathcal{L}_{\text{RIFT}}$) is equivalent to maximizing a surrogate lower bound of the expected reward $\mathcal{J}(\theta)$.

First, we rewrite the expected reward objective using Importance Sampling (IS) to shift the expectation from the policy distribution $\pi_\theta$ to the reference data distribution $\pi_{ref}$:
\begin{equation}
\begin{aligned}
\mathcal{J}(\theta) & = \mathbb{E}_{y \sim \pi_\theta} [r(x, y)] \\ 
&= \mathbb{E}_{y \sim \pi_{ref}} \left[ \frac{\pi_\theta(y|x)}{\pi_{ref}(y|x)} r(x, y) \right].    
\end{aligned}
\end{equation}
Let $\rho(y|x) = \frac{\pi_\theta(y|x)}{\pi_{ref}(y|x)}$ be the likelihood ratio. We utilize the fundamental inequality relating linear and logarithmic functions: for any $u > 0$, $\log u \le u - 1$, which implies $u \ge 1 + \log u$.

We decompose the objective into contributions from positive ($\mathcal{D}^+$) and negative ($\mathcal{D}^-$) domains:

\textbf{Case 1: Positive Samples ($r > 0$).}
Applying the inequality $\rho \ge 1 + \log \rho$:
\begin{equation}
\begin{aligned}
\mathbb{E}_{\mathcal{D}^+} [\rho \cdot r] &\ge \mathbb{E}_{\mathcal{D}^+} \left[ r (1 + \log \rho) \right] \\
&= \mathbb{E}_{\mathcal{D}^+} \left[ r \left(1 + \log \pi_\theta - \log \pi_{ref}\right) \right] \\
&= \mathbb{E}_{\mathcal{D}^+} [r \log \pi_\theta] + C_1,
\end{aligned}
\end{equation}
where $C_1 = \mathbb{E}_{\mathcal{D}^+}[r(1 - \log \pi_{ref})]$ is a constant with respect to $\theta$. This recovers the positive component of $-\mathcal{L}_{\text{RIFT}}$.

\textbf{Case 2: Negative Samples ($r < 0$).}
For negative samples, we seek a lower bound for the term $\rho \cdot r$. Note that $(x, y)$ are data sampled from the distribution $\pi_{ref}$, there exits a constant $C_2 > 0$, such that $\pi_{ref} (y | x) \geq C_2$ for all $(x, y)$. Thus, 
\begin{equation}
\mathbb{E}_{\mathcal{D}^-} [\rho \cdot r] \geq \frac{1}{C_2} \mathbb{E}_{\mathcal{D}^-} [\pi_\theta (y | x) r (x, y)]
\end{equation}

\textbf{Synthesis:}
Combining the results, we define a global surrogate objective $\mathcal{J}_{\text{surr}}$ (the IS-derived logarithmic lower bound for positive samples and the linear lower bound for negative samples):
\begin{equation}
\begin{aligned}
\mathcal{J}(\theta) \geq &\mathbb{E}_{\mathcal{D}^+} [r \log \pi_\theta] + \frac{1}{C_2}\mathbb{E}_{\mathcal{D}^-} [r \pi_\theta] + C_1 \\ \geq & -\frac{1}{C_1} \mathcal{L}_{\text{RIFT}}(\theta) + C_2.    
\end{aligned}
\end{equation}
Therefore, maximizing $-\mathcal{L}_{\text{RIFT}}$ (or minimizing $\mathcal{L}_{\text{RIFT}}$) effectively maximizes a rigorous surrogate lower bound of the true expected reward $\mathcal{J}(\theta)$.
\end{proof}

\end{document}